\let\pen\@undefined
\newcommand{\pen}{\rho}
\newcommand{\supp}{{\rm supp}}
\newcommand{\uSc}{_{S^{C}}}
\newcommand{\acro}{HIPPO }
\newcommand{\Pconv}{\overset{\PP}{\rightarrow}}
\begin{document}

\title{Variance function estimation in high-dimensions}
\author{Mladen Kolar, James Sharpnack\footnote{Authors listed alphabetically.}}

\maketitle

\begin{abstract}
  We consider the high-dimensional heteroscedastic regression model,
  where the mean and the log variance are modeled as a linear
  combination of input variables. Existing literature on
  high-dimensional linear regression models has largely ignored
  non-constant error variances, even though they commonly occur in a
  variety of applications ranging from biostatistics to finance. In
  this paper we study a class of non-convex penalized pseudolikelihood
  estimators for both the mean and variance parameters. We show that
  the Heteroscedastic Iterative Penalized Pseudolikelihood Optimizer
  (HIPPO) achieves the oracle property, that is, we prove that the
  rates of convergence are the same as if the true model was known. We
  demonstrate numerical properties of the procedure on a simulation
  study and real world data.
\end{abstract}

%%% Local Variables:
%%% TeX-master: "paper"
%%% End:
\section{Introduction}

High-dimensional regression models have been studied extensively in
both machine learning and statistical literature. Statistical
inference in high-dimensions, where the sample size $n$ is smaller
than the ambient dimension $p$, is impossible without assumptions.  As
the concept of parsimony is important in many scientific domains, most
of the research in the area of high-dimensional statistical inference
is done under the assumption that the underlying model is sparse, in
the sense that the number of relevant parameters is much smaller than
$p$, or that it can be well approximated by a sparse model.

Penalization of the empirical loss by the $\ell_1$ norm has become a
popular tool for obtaining sparse models and huge amount of literature
exists on theoretical properties of estimation procedures
\citep[see,e.g.,][and references therein]{zhao06model,
  wainwright06sharp, zhang09some, zhang08sparsity} and on efficient
algorithms that numerically find estimates \citep[see][for an
extensive literature review]{bach11optimization}.  Due to limitations
of the $\ell_1$ norm penalization, high-dimensional inference methods
based on the class of concave penalties have been proposed that have
better theoretical and numerical properties
\citep[see,e.g.,][]{fan01variable, fan09nonconcave, lv09unified,
  zhang11general}. 

In all of the above cited work, the main focus is on model selection
and mean parameter estimation. Only few papers deal with estimation of
the variance in high-dimensions \citep{sun11scaled, fan12variance}
although it is a fundamental problem in statistics. Variance appears
in the confidence bounds on estimated regression coefficients and is
important for variable selection as it appears in Akaike's information
criterion (AIC) and the Bayesian information criterion
(BIC). Furthermore, it provides confidence on the predictive
performance of a forecaster.

In applied regression it is often the case that the error variance is
non-constant.  Although the assumption of a constant variance can
sometimes be achieved by transforming the dependent variable, e.g., by
using a Box-Cox transformation, in many cases transformation does not
produce a constant error variance \citep{carroll88transformation}.
Another approach is to ignore the heterogeneous variance and use standard
estimation techniques, but such estimators are less efficient. Aside from
its use in reweighting schemes, estimating variance is important because 
the resulting prediction intervals become more accurate and
it is often important to explore which input variables drive
the variance. In this paper, we will model the variance directly as a
parametric function of the explanatory variables. 

Heteroscedastic regression models are used in a variety of fields
ranging from biostatistics to econometrics, finance and quality
control in manufacturing. In this paper, we study penalized estimation
in high-dimensional heteroscedastic linear regression models, where
the mean and the log variance are modeled as a linear combination of
explanatory variables.  Modeling the log variance as a linear
combination of the explanatory variables is a common choice as it
guarantees positivity and is also capable of capturing variance that
may vary over several orders of magnitudes
\cite{carroll88transformation, harvey76}. Main contributions of this
paper are as follows. First, we propose HIPPO (Heteroscedastic
Iterative Penalized Pseudolikelihood Optimizer) for estimation of both
the mean and variance parameters. Second, we establish the oracle
property (in the sense of \citet{fan09nonconcave}) for the estimated
mean and variance parameters.  Finally, we demonstrate numerical
properties of the proposed procedure on a simulation study, where it
is shown that HIPPO outperforms other methods, and analyze a real data
set.

\subsection{Problem Setup and Notation}

Consider the usual heteroscedastic linear model,
\begin{equation}
  \label{eq:model}
  y_i = \xb_i' \betab + \sigma(\xb_i, \thetab) \epsilon_i, 
  \quad i = 1,\ldots,n,  
\end{equation}
where $\Xb = (\xb_1, \ldots, \xb_n)' = (\Xb_1, \ldots, \Xb_p)$ is an
$n \times p$ matrix of predictors with \iid~rows $\xb_1, \ldots,
\xb_n$, $\yb = (y_1, \ldots, y_n)$ is an $n$-vector of responses, the
vectors $\betab \in \RR^{p}$ and $\thetab \in \RR^{p}$ are $p$-vectors
of mean and variance parameters, respectively, and $\epsilonb =
(\epsilon_1, \ldots \epsilon_n)$ is an $n$-vector of \iid~random noise
with mean $0$ and variance $1$. We assume that the noise $\epsilonb$
is independent of the predictors $\Xb$. The function $\sigma(\xb,
\thetab)$ has a known parametric form and, for simplicity of
presentation, we assume that it takes a particular form $\sigma(\xb_i,
\thetab) = \exp(\xb_i'\thetab/2)$. 

Throughout the paper we use $[n]$ to denote the set $\{1,\ldots,n\}$.
For any index set $S \subseteq [p]$, we denote $\betab_S$ to be the
subvector containing the components of the vector $\betab$ indexed by
the set $S$, and $\Xb_S$ denotes the submatrix containing the columns
of $\Xb$ indexed by $S$. For a vector $\ab \in \RR^n$, we denote ${\rm
  supp}(\ab) = \{j\ :\ a_j \neq 0\}$ the support set, $\norm{\ab}_q$,
$q \in (0,\infty)$, the $\ell_q$-norm defined as $\norm{\ab}_q =
(\sum_{i\in[n]} a_i^q)^{1/q}$ with the usual extensions for $q \in
\{0,\infty\}$, that is, $\norm{\ab}_0 = |{\rm supp}(\ab)|$ and
$\norm{\ab}_\infty = \max_{i\in[n]}|a_i|$. For notational simplicity,
we denote $\norm{\cdot} = \norm{\cdot}_2$ the $\ell_2$ norm. For a
matrix $\Ab \in \RR^{n \times p}$ we denote $\opnorm{\Ab}{2}$ the
operator norm, $\norm{\Ab}_F$ the Frobenius norm, and
$\Lambda_{\min}(\Ab)$ and $\Lambda_{\max}(\Ab)$ denote the smallest
and largest eigenvalue respectively.

Under the model in \eqref{eq:model}, we are interested in estimating
both $\betab$ and $\thetab$. In high-dimensions, when $p \gg n$, it is
common to assume that the support $\betab$ is small, that is, $S =
{\rm supp}(\betab)$ and $|S| \ll n$. Similarly, we assume that the
support $T = {\rm supp}(\thetab)$ is small.

\subsection{Related Work}

Consider the model \eqref{eq:model} with constant variance, i.e.,
$\sigma(\xb, \thetab) \equiv \sigma_0$. Most of the existing
high-dimensional literature is focused on estimation of the mean
parameter $\betab$ in this homoscedastic regression model. Under a
variety of assumptions and regularity conditions, any penalized
estimation procedure mentioned in introduction can, in theory, select
the correct sparse model with probability tending to $1$. Literature
on variance estimation is not as developed. \citet{fan12variance}
proposed a two step procedure for estimation of the unknown variance
$\sigma_0$, while \cite{sun11scaled} proposed an estimation procedure
that jointly estimates the model and the variance.

Problem of estimation in the heteroscedastic linear regression models
have been studied extensively in the classical setting with $p$ fixed,
however, the problem of estimation under the model~\eqref{eq:model}
when $p \gg n$ has not been adequately studied.  \citet{jia10lasso}
assume that $\sigma(\xb, \thetab) = |\xb'\betab|$ and show that Lasso
is sign consistent for the mean parameter $\betab$ under certain
conditions. Their study shows limitations of lasso, for which many
highly scalable solvers exist. However, no new methodology is
developed, as the authors acknowledge that the log-likelihood function
is highly non-convex.  \citet{dette11adaptive} study the adaptive
lasso under the model in \eqref{eq:model}.  Under certain regularity
conditions, they show that the adaptive lasso is consistent, with
suboptimal asymptotic variance. However, the weighted adaptive lasso
is both consistent and achieves optimal asymptotic variance, under the
assumption that the variance function is consistently
estimated. However, they do not discuss how to obtain an estimator of
the variance function in a principled way and resort to an ad-hoc
fitting of the residuals.  \citet{daye11high} develop HHR procedure
that optimizes the penalized log-likelihood under \eqref{eq:model}
with the $\ell_1$-norm penalty on both the mean and variance
parameters. As the objective is not convex, HHR estimates $\betab$
with $\thetab$ fixed and then estimates $\thetab$ with $\betab$ fixed,
until convergence. Since the objective is biconvex, HHR converges to a
stationary point. However, no theory is provided for the final
estimates.

%%% Local Variables:
%%% TeX-master: "paper"
%%% End:
\section{Methodology}

In this paper, we propose HIPPO (Heteroscedastic Iterative Penalized
Pseudolikelihood Optimizer) for estimating $\betab$ and $\thetab$
under model~\eqref{eq:model}.

In the first step, HIPPO finds the penalized pseudolikelihood
maximizer of $\betab$ by solving the following objective
\begin{equation}
\label{eq:stage1}
\hat\betab = \arg
\min_{\betab \in \RR^{p}}
  \norm{\yb - \Xb\betab}^2 + 
  2 n \sum_{j \in [p]} \pen_{\lambda_S}(|\beta_j|),
\end{equation}
where $\pen_{\lambda_S}$ is the penalty function and the tuning
parameter $\lambda_S$ controls the sparsity of the solution
$\hat\betab$. 

In the second step, HIPPO forms the penalized pseudolikelihood
estimate for $\thetab$ by solving 
\begin{equation}
  \label{eq:stage2}
\begin{aligned}
  \hat\thetab = \arg \min_{\thetab \in \RR^{p}}
  \sum_{i \in [n]} \xb_i'\thetab & + 
  \sum_{i \in [n]} \hat{\eta}_i^2 \exp(-\xb_i'\thetab) \\
  & \quad + 4 n \sum_{j \in [p]} \rho_{\lambda_T}(|\theta_j|)
\end{aligned}
\end{equation}
where $\hat{\etab} = \yb - \Xb \hat{\betab}$ is the vector of
residuals.

Finally, HIPPO computes the reweighted estimator of the mean by
solving
\begin{equation}
  \label{eq:stage3}
  \hat\betab_w = \arg \min_{\betab \in \RR^{p}}
  \sum_{i \in [n]} \frac{(y_i - \xb_i'\betab)^2}{\hat\sigma_i} + 
  2 n \sum_{j \in [p]} \pen_{\lambda_S}(|\beta_j|)
\end{equation}
where $\hat\sigma_i = \exp(\xb_i'\hat\thetab/2)$ are the weights.  

In classical literature, estimation under heteroscedastic models is
achieved by employing a pseudolikelihood objective.  The
pseudolikelihood maximization principle prescribes the scientist to
maximize a surrogate likelihood, i.e.~one that is believed to be
similar to the likelihood with the true unknown fixed variances (or
means alternatively).  In classical theory, central limit theorems are
derived for many pseudo-maximum likelihood (PML) estimators using
generalized estimating equations \cite{zieglerGEE}.  HIPPO fits neatly
into the pseudolikelihood framework because the first step is a
regularized PML where only the mean structure needs to be correctly
specified.  The second step and third steps may be similarly cast as
PML estimators.  Indeed, all our theoretical results are due to the
fact that in each step we are optimizing a pseudolikelihood that is
similar to the true unknown likelihoods (with alternating free
parameters).  Moreover, it is known that if the surrogate variances in
the mean PML are more similar to the true variances then the resulting
estimates will be more asymptotically efficient.  With this in mind,
we recommend a third reweighting procedure with the variance estimates
from the second step.

\citet{fan01variable} advocate usage of penalty functions that result
in estimates satisfying three properties: unbiasedness, sparsity and
continuity. A reasonable estimator should correctly identify the
support of the true parameter with probability converging to
one. Furthermore, on this support, the estimated coefficients should
have the same asymptotic distribution as if an estimator that knew the
true support was used. Such an estimator satisfies the {\it oracle
  property}. A number of concave penalties result in estimates that
satisfy this property: the SCAD penalty \citep{fan01variable}, the MCP
penalty \citep{zhang2010nearly} and a class of folded concave
penalties \cite{lv09unified}. For concreteness, we choose to use the
SCAD penalty, which is defined by its derivative 
\begin{equation}
\pen_{\lambda}'(t) = \lambda \left[ 
     I\{ t \le \lambda \} +
     \frac{(a \lambda - t)_+}{(a-1)\lambda} I\{t > \lambda\} 
   \right],
\end{equation}
where often $a = 3.7$ is used. Note that estimates produced by the
$\ell_1$-norm penalty are biased, and hence this penalty does not
achieve oracle property.

HIPPO is related to the iterative HHR algorithm of
\citet{daye11high}. In particular, the first two iterations of HHR are
equivalent to HIPPO with the SCAD penalty replaced with the $\ell_1$
norm penalty. In practice, one can continue iterating between solving
\eqref{eq:stage2} and \eqref{eq:stage3}, however, establishing
theoretical properties for those iterates is a non-trivial task. From
our numerical studies, we observe that HIPPO performs well when
stopped after the first two iterations.

\subsection{Tuning Parameter Selection}

As described in the previous section, HIPPO requires selection of the
tuning parameters $\lambda_S$ and $\lambda_T$, which balance the
complexity of the estimated model and the fit to data. A common
approach is to form a grid of candidate values for the tuning
parameters $\lambda_S$ and $\lambda_T$ and chose those that minimize
the AIC or BIC criterion
\begin{equation}
  \label{eq:AIC-criterion}
  {\rm AIC}(\lambda_S, \lambda_T) = 
   \sum_{i \in [n]} \ell(y_i, \xb_i; \hat\betab, \hat\thetab)
   + 2 \widehat{df},
\end{equation}
\begin{equation}
  \label{eq:BIC-criterion}
  {\rm BIC}(\lambda_S, \lambda_T) = 
   \sum_{i \in [n]} \ell(y_i, \xb_i; \hat\betab, \hat\thetab)
   +  \widehat{df} \log n
\end{equation}
where, up to constants,
\[
\ell(y, \xb; \betab, \thetab) = 
\xb'\thetab + (y - \xb'\betab)^2\exp(-\xb'\thetab)
\]
is the negative log-likelihood and 
\[
\widehat{df} = |{\rm supp}(\hat\betab)| + |{\rm supp}(\hat\thetab)|
\]
is the estimated degrees of freedom. In Section~\ref{sec:simulation},
we compare performance of the AIC and the BIC for HIPPO in a
simulation study.

\subsection{Optimization Procedure}

In this section, we describe numerical procedures used to solve
optimization problems in \eqref{eq:stage1}, \eqref{eq:stage2} and
\eqref{eq:stage3}. Our procedures are based on the local linear
approximation for the SCAD penalty developed in \cite{zou08onestep},
which gives:
\[
\begin{aligned}
\pen_{\lambda}(|\beta_j|) \approx
\pen_{\lambda}(|\beta_j^{(k)}|) + 
\pen_{\lambda}'(|\beta_j^{(k)}|)&(|\beta_j| - |\beta_j^{(k)}|),\\ 
&\qquad \text{for }\beta_j \approx \beta_j^{(k)}.
\end{aligned}
\]
This approximation allows us to substitute the SCAD penalty $\sum_{j
  \in [p]}\pen_{\lambda}(|\beta_j|)$ in \eqref{eq:stage1},
\eqref{eq:stage2} and \eqref{eq:stage3} with
\begin{equation}  
  \label{eq:scad_sub}
  \sum_{j \in [p]}\pen_{\lambda}'(|\hat\beta_j^{(k)}|)|\beta_j|, 
\end{equation}
and iteratively solve each objective until convergence of
$\{\hat\betab^{(k)}\}_k$. We set the initial estimates
$\hat\betab^{(0)}$ and $\hat\thetab^{(0)}$ to be the solutions of the
$\ell_1$-norm penalized problems.  The convergence of these iterative
approximations follows from the convergence of the MM
(minorize-maximize) algorithms \citep{zou08onestep}.

With the approximation of the SCAD penalty given in
\eqref{eq:scad_sub}, we can solve \eqref{eq:stage1} and
\eqref{eq:stage3} using standard lasso solvers, e.g., we use the
proximal method of \citet{beck09fast}. The objective in
\eqref{eq:stage2} is minimized using a coordinate descent algorithm,
which is detailed in \citet{daye11high}.

%%% Local Variables:
%%% TeX-master: "paper"
%%% End:
\section{Theoretical Properties of \acro}

In this section, we present theoretical properties of HIPPO. In
particular, we show that HIPPO achieves the oracle property for
estimating the mean and variance under the model \eqref{eq:model}.
All the proofs are deferred to Appendix. 

We will analyze HIPPO under the following assumptions, which are
standard in the literature on high-dimensional statistical learning
\citep[see, e.g.][]{fan12variance}.

{\it Assumption 1.}  The matrix $\Xb = (\xb_1, \ldots, \xb_n)' \in
\RR^{n \times p}$ has independent rows that satisfy $\xb_i =
\Sigmab^{1/2} \zb_i$ where $\{\zb_i\}_i$ are \iid~subgaussian random
variables with $\EE \zb_i = \zero$, $\EE \zb_i\zb_i' = \Ib$ and
parameter $K$ (see Appendix for more details on subgaussian random
variables). Furthermore, there exist two constants $C_{\min}, C_{\max}
> 0$ such that
\[
  0 < C_{\min} \leq \Lambda_{\min}(\Sigmab) \leq
  \Lambda_{\max}(\Sigmab) \leq C_{\max} < \infty.
\]

{\it Assumption 2.} The errors $\epsilon_1, \ldots, \epsilon_n$ are
\iid~subgaussian with zero mean and parameter $1$.

{\it Assumption 3.} There are two constants $\bar\beta$ and
$\bar\theta$ such that $\norm{\betab} \leq \bar \beta < \infty$ and
$\norm{\thetab} \leq \bar\theta < \infty$.

{\it Assumption 4.} $|S| = C_S n^{\alpha_S}$ and $|T| = C_T
n^{\alpha_T}$ for some $\alpha_S \in (0,1)$ and $\alpha_T \in (0,1/3)$ and constants
$C_S, C_T > 0$.

The following assumption will be needed for showing the consistency of
the weighted estimator $\hat\betab_w$ in \eqref{eq:stage3}. 

{\it Assumption 5.}  Define 
\[
\Db_{SS} = n^{-1}\Xb_S'\diag(\exp(-\Xb\theta))\Xb_S.
\]
There exist constants $0 \leq D_{\min}, D_{\max} \leq \infty$ such that
\begin{equation*}
\begin{aligned}
  \lim_{n \rightarrow \infty}&\PP[\Lambda_{\max}(\Db_{SS}) \leq D_{\max}]=1,
  \qquad\text{ and } \\
  \lim_{n \rightarrow \infty}&\PP[\Lambda_{\min}(\Db_{SS}) \geq D_{\min}]=1.
\end{aligned}
\end{equation*}
Furthermore, we have that
\begin{equation*}
\lim_{n \rightarrow \infty} \opnorm{\Db_{SS} - \EE\Db_{SS}}{2} = o_P(1).
\end{equation*}

With these assumption, we state our first result, regarding the
estimator $\hat\betab$ in \eqref{eq:stage1}. 

\begin{theorem}
  \label{thm:ols_mean}
  Suppose that the assumptions (1)-(4) are satisfied. Furthermore,
  assume that $\lambda_S \geq
  c_1\sqrt{\log(p)\exp(\sqrt{c_2\log(n)})/n}$, $\min_{j \in [S]}
  |\beta_j| \gg \lambda_S \gg
  c_3\sqrt{\log(s)\exp(\sqrt{c_2\log(n)})/n}$ and $\log(p) =
  \Ocal(n^{\alpha_0})$ for some $\alpha_0 \in (0,1)$. Then there is a
  strict local minimizer $\hat\betab = (\hat\betab_S', \zero_{S^C}')'$
  of \eqref{eq:stage1} that satisfies
  \begin{equation}
    \label{eq:bound_stage_1}
    \norm{\hat\betab_S - \betab_S}_\infty \leq c_3\sqrt{\frac{\exp(\sqrt{c_2\log(n)})\log(s)}{n}}
  \end{equation}
  for some positive constants $c_1, c_2$, and $c_3$ and
  sufficiently large $n$.

  In addition, if we suppose that assumption (5) is satisfied, then
  for any fixed $\ab \in \RR^{s}$ with $\norm{\ab}_2 = 1$ the
  following weak convergence holds
  \begin{equation}
    \label{eq:variance_stage_1}
    \frac{\sqrt{n}}{\zeta}\ab'(\hat\betab_S - \betab_S)
    \stackrel{D}{\longrightarrow} \Ncal(0, 1)
  \end{equation}
  where $\zeta^2 =
  \ab'\Sigmab_{SS}^{-1}\EE\Db_{SS}\Sigmab_{SS}^{-1}\ab$.
\end{theorem}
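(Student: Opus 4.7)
The plan is to prove both parts via an oracle (primal-dual witness) argument. Define the oracle estimator $\tilde\betab$ supported on $S$ by $\tilde\betab_S = (\Xb_S'\Xb_S)^{-1}\Xb_S'\yb$ and $\tilde\betab_{S^C} = \zero$. Under the minimum signal condition $\min_{j\in S}|\beta_j| \gg \lambda_S$, I will show $\tilde\betab_S$ is eventually at a point where the SCAD derivative vanishes (since $|\tilde\beta_j| > a\lambda_S$ w.h.p.), so $\tilde\betab_S$ satisfies the zero-gradient KKT condition on $S$ automatically. The task then reduces to (i) bounding $\|\tilde\betab_S - \betab_S\|_\infty$ and (ii) verifying the strict dual feasibility condition on $S^C$, after which local strong convexity of the quadratic part shows that $\tilde\betab$ is a strict local minimizer and can be taken as $\hat\betab$.

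For step (i), I write $\tilde\betab_S - \betab_S = (\Xb_S'\Xb_S)^{-1}\Xb_S'\etab$ where $\eta_i = \exp(\xb_i'\thetab/2)\epsilon_i$. Conditional on $\Xb$, each coordinate of $n^{-1}\Xb_S'\etab$ is sub-Gaussian with parameter proportional to $n^{-1/2}(n^{-1}\sum_i X_{ij}^2 \sigma_i^2)^{1/2}$. By Assumptions 1 and 3, $\xb_i'\thetab$ is sub-Gaussian with variance $\leq C_{\max}\bar\theta^2$, so $\max_i \sigma_i^2 = \exp(\max_i \xb_i'\thetab) \leq \exp(c_2^{1/2}\sqrt{\log n})$ on a high-probability event, which is precisely where the factor $\exp(\sqrt{c_2\log n})$ enters. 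A union bound over $s$ coordinates gives $\|n^{-1}\Xb_S'\etab\|_\infty = O_P(\sqrt{\exp(\sqrt{c_2\log n})\log s/n})$. Combined with $\|(n^{-1}\Xb_S'\Xb_S)^{-1}\|_\infty = O_P(1)$, obtained from Assumption 1 and a sub-Gaussian covariance concentration bound valid since $|S|=O(n^{\alpha_S})$ with $\alpha_S<1$, this yields \eqref{eq:bound_stage_1}.

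For step (ii), the same concentration argument is repeated coordinatewise over $j\in S^C$, with the union bound now costing $\log p$. Choosing $\lambda_S \geq c_1\sqrt{\exp(\sqrt{c_2\log n})\log p/n}$ (with $\log p = O(n^{\alpha_0})$ ensuring the probability bounds go through) dominates $\|n^{-1}\Xb_{S^C}'(\yb - \Xb_S\tilde\betab_S)\|_\infty$ with strict inequality, so the subgradient condition for each $j\in S^C$ is satisfied strictly. Strict local optimality then follows because the Hessian on $S$ equals $n^{-1}\Xb_S'\Xb_S$ up to SCAD curvature that vanishes in a neighborhood of $\tilde\betab_S$, and this Gram matrix is positive definite with high probability.

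For the asymptotic normality claim, once $\hat\betab=\tilde\betab$, I decompose
\[
\sqrt{n}\,\ab'(\hat\betab_S - \betab_S) = \ab'(n^{-1}\Xb_S'\Xb_S)^{-1}\,n^{-1/2}\Xb_S'\etab.
\]
Conditional on $\Xb$, the vector $n^{-1/2}\Xb_S'\etab$ is a sum of $n$ independent mean-zero terms with conditional covariance $\Db_{SS}$. Combined with $n^{-1}\Xb_S'\Xb_S \Pconv \Sigmab_{SS}$ and Assumption 5, the conditional variance converges in probability to $\zeta^2$. The conditional CLT I would apply is Lindeberg--Feller on the summands $u_{n,i}\epsilon_i$ with $u_{n,i} = \ab'(\Xb_S'\Xb_S)^{-1}\xb_{S,i}\sigma_i$; the Lindeberg condition follows from $\max_i u_{n,i}^2/\sum_i u_{n,i}^2 \Pconv 0$, which holds because $\max_i \sigma_i = \exp(O_P(\sqrt{\log n}))$ and $\max_i \|\xb_{S,i}\|$ are both sub-polynomial, while the denominator concentrates around a constant. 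Passing from conditional weak convergence to the unconditional statement is a standard Slutsky argument. The main obstacle is cleanly threading the $\exp(\sqrt{c_2\log n})$ factor through every high-probability event (for $\max_i\sigma_i$, for the sample covariance, and for the noise inner products) and coordinating the $\ell_\infty$-to-spectral conversions on the $s\times s$ oracle block without losing any polynomial in $n$.
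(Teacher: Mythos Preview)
Your proposal follows essentially the same oracle/primal--dual witness strategy as the paper: define $\tilde\betab_S=(\Xb_S'\Xb_S)^{-1}\Xb_S'\yb$, verify the three Fan--Lv KKT conditions for the SCAD objective, control $\max_i\sigma_i^2$ via sub-Gaussianity of $\xb_i'\thetab$ (this is exactly where the $\exp(\sqrt{c_2\log n})$ factor enters in the paper as well), and then argue asymptotic normality via a triangular-array CLT. The paper uses Lyapunov rather than Lindeberg--Feller and, for the $S^C$ block, first regresses $\Xb_j$ on $\Xb_S$ to write $\Xb_j=\Xb_S\taub+\Eb_j$ before bounding $\Eb_j'(\Ib-\Pb_S)\Wb^{-1}\epsilonb$; your direct bound on $\Xb_j'(\Ib-\Pb_S)\Wb^{-1}\epsilonb$ works too and is in fact slightly simpler, just with a marginally looser constant.

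There is one place where your argument, as written, can lose a factor. You propose to bound $\|\tilde\betab_S-\betab_S\|_\infty$ by $\|(n^{-1}\Xb_S'\Xb_S)^{-1}\|_\infty\cdot\|n^{-1}\Xb_S'\etab\|_\infty$ and then assert $\|(n^{-1}\Xb_S'\Xb_S)^{-1}\|_\infty=O_P(1)$ from sub-Gaussian covariance concentration. But standard covariance concentration only controls the spectral norm, and the $\ell_\infty\!\to\!\ell_\infty$ operator norm can exceed the spectral norm by a factor $\sqrt{s}$; under Assumptions~1--4 alone there is no bound on $\|\Sigmab_{SS}^{-1}\|_\infty$ to fall back on. The paper sidesteps this by \emph{not} splitting the product: it bounds each coordinate $\eb_j'(\tilde\betab_S-\betab_S)$ directly as a sub-Gaussian random variable (conditional on $\Xb$) with parameter $\sqrt{2m_{jj}}$, where $m_{jj}$ is the $j$th diagonal of $\Mb=(\Xb_S'\Xb_S)^{-1}\Xb_S'\diag(\sigma^2)\Xb_S(\Xb_S'\Xb_S)^{-1}$, and then uses $m_{jj}\le\Lambda_{\max}(\Mb)\le n^{-1}\bar\sigma^2\Lambda_{\max}\bigl((n^{-1}\Xb_S'\Xb_S)^{-1}\bigr)$, which only needs the spectral bound. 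If you replace your two-step product bound with this one-step coordinate bound, the rest of your argument goes through and matches the paper.
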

The first result stated in Theorem~\ref{thm:ols_mean} established that
$\hat\betab$ achieves the weak oracle property in the sense of
\cite{lv09unified}. The extra term $\exp(\sqrt{\log n})$ is
subpolynomial in $n$ and appears in the bound \eqref{eq:bound_stage_1}
due to the heteroscedastic nature of the errors.  The second result
establishes the strong oracle property of the estimator $\hat \betab$
in the sense of \cite{fan09nonconcave}, that is, we establish the
asymptotic normality on the true support $S$. The asymptotic normality
shows that $\hat\betab_S$ has the same asymptotic variance as the
ordinary least squares (OLS) estimator on the true support. However,
in the case of a heteroscedastic model the OLS estimator is dominated
by the generalized least squares estimator. Later in this section, we
will demonstrate that $\hat\betab_w$ has better asymptotic
variance. Note that $\hat\betab$ correctly selects the mean model and
estimates the parameters at the correct rate. From the upper and lower
bounds on $\lambda_S$, we see how the rate at which $p$ can grow and
the minimum coefficient size are related. Larger the ambient dimension
$p$ gets, larger the size of $\lambda_S$, which lower bounds the size
of the minimum coefficient.

% \begin{proof}[Proof sketch.]
%   From Theorem~1 in \cite{fan09nonconcave}, we obtain necessary and
%   sufficient conditions for $\hat\beta$ to be a strict local
%   minimizer of the program \eqref{eq:stage1}.  These equations are
%   satisfied asymptotically in probability because the particular
%   choice of $\lambda_S$ and the sub-Gaussian structure of $\Xb$.
%   Because the oracle is solving the unconstrained PML, it enjoys the
%   known oracle property under the regularity conditions
%   \cite{zieglerGEE}.
% \end{proof}

Our next result establishes correct model selection for the variance
parameter $\thetab$.
\begin{theorem}
  \label{thm:var_est}
  Suppose that assumptions (1)-(5) are satisfied.
  Suppose further that $\lambda_T \ge n^{\alpha_T - 1/2}\log(p)\log(n)$ and $\min_{j \in [T]} |\theta_j| \ge \lambda_T$.
  Then there is a strict local minimizer $\hat\thetab = (\hat\thetab'_T,\zero'_{T^C})'$ with the strong oracle property,
\[
  \norm{n^{(1 - \alpha_T)/2} (\hat{\thetab} - \thetab) } = O_\PP (1)
\]
  Morover, for any fixed $\ab \in \RR^{t}$ with $\norm{\ab}_2 = 1$ the
  following weak convergence holds
  \begin{equation}
    \label{eq:variance_stage_2}
    \frac{\sqrt{n}}{\zeta}\ab'(\hat\thetab_T - \thetab_T)
    \stackrel{D}{\longrightarrow} \Ncal(0, 1)
  \end{equation}
  where $\zeta^2 =
  \ab'\Sigmab_{TT}^{-1}\ab$.
\end{theorem}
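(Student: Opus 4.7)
The plan is the standard oracle/KKT argument for folded concave penalized $M$-estimators: construct a restricted minimizer on the true support $T$ (fixing $\thetab_{T^C} = \zero$), certify strict dual feasibility at every coordinate of $T^C$, and then Taylor-expand the restricted score to read off the limit law. Writing $\hat\eta_i = \sigma(\xb_i,\thetab)\epsilon_i + \xb_i'(\betab - \hat\betab)$ and letting $\xb_{iT}$ denote the subvector of $\xb_i$ indexed by $T$, the technical heart is to show that the first-stage plug-in error $\hat\eta_i - \sigma(\xb_i,\thetab)\epsilon_i$ propagates through the stage-2 objective at strictly smaller order than both the oracle score and the KKT slack; the sup-norm rate on $\hat\betab_S - \betab_S$ delivered by Theorem~\ref{thm:ols_mean} is the key input.

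First I would set up the restricted objective
\[
  g_T(\thetab_T) = \sum_i \xb_{iT}'\thetab_T + \sum_i \hat\eta_i^2 \exp(-\xb_{iT}'\thetab_T) + 4n\sum_{j \in T}\pen_{\lambda_T}(|\theta_j|),
\]
and analyze it by a Taylor argument. The restricted Hessian $\sum_i \hat\eta_i^2 \exp(-\xb_{iT}'\tilde\thetab)\xb_{iT}\xb_{iT}'$ concentrates around a positive multiple of $n\Sigmab_{TT}$ uniformly over a shrinking neighborhood of $\thetab_T$, by combining the $T$-analogue of Assumption~5, the Lipschitz bound on $\exp(-\cdot)$ on a ball, and the closeness of $\hat\eta_i^2$ to $\sigma(\xb_i,\thetab)^2\epsilon_i^2$ that follows from Theorem~\ref{thm:ols_mean}. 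Expanding the restricted gradient at $\thetab_T$ gives
\[
\begin{aligned}
  \nabla_T g_T(\thetab_T)
  = \sum_i \xb_{iT}(1-\epsilon_i^2)
  &- 2\sum_i \xb_{iT}\sigma(\xb_i,\thetab)^{-1}\epsilon_i\,\xb_i'(\betab - \hat\betab) \\
  &- \sum_i \xb_{iT}\sigma(\xb_i,\thetab)^{-2}\bigl(\xb_i'(\betab - \hat\betab)\bigr)^2.
\end{aligned}
\]
The subgaussian leading sum is $O_\PP(\sqrt{n|T|})$; plugging the $\ell_\infty$ rate of Theorem~\ref{thm:ols_mean} together with $\alpha_S,\alpha_T < 1/3$ into the remaining two terms makes them of strictly smaller order. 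A compact-ball existence argument then yields a strict local minimizer $\hat\thetab_T$ of $g_T$ with $\|\hat\thetab_T - \thetab_T\| = O_\PP(n^{-(1-\alpha_T)/2})$. Since this rate is $o(\lambda_T)$ and $\min_{j \in T}|\theta_j| \ge \lambda_T$, each $|\hat\theta_j|$ stays above the SCAD elbow $a\lambda_T$ for $n$ large, so the penalty term is inactive at $\hat\thetab_T$.

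Next I would verify strict dual feasibility on $T^C$ and extract the limit law. For each $j \in T^C$ one needs $|\partial_j g(\hat\thetab_T,\zero_{T^C})| < 4n\lambda_T$, where $\partial_j g = \sum_i x_{ij}(1 - \hat\eta_i^2 \exp(-\xb_{iT}'\hat\thetab_T))$; the same decomposition as above, union bounded over the $p$ coordinates using subgaussian tail bounds, yields this strict inequality with probability tending to one provided $\lambda_T \gtrsim n^{\alpha_T - 1/2}\log(p)\log(n)$, which is exactly the assumed lower bound. This promotes $(\hat\thetab_T,\zero_{T^C})$ to a strict local minimizer of the full objective. On that event $\nabla_T g_T(\hat\thetab_T) = 0$, so a one-term Taylor expansion combined with the Hessian concentration above gives
\[
  \sqrt{n}\,\ab'(\hat\thetab_T - \thetab_T) = -\tfrac{1}{2}\ab'\Sigmab_{TT}^{-1} n^{-1/2}\sum_i \xb_{iT}(1 - \epsilon_i^2) + o_\PP(1),
\]
and the Lindeberg--Feller CLT applied to the projected leading sum (zero mean, variance $\mathrm{Var}(\epsilon^2)\,\ab'\Sigmab_{TT}\ab$) delivers \eqref{eq:variance_stage_2} with $\zeta^2 = \ab'\Sigmab_{TT}^{-1}\ab$ after the fourth-moment constant is absorbed into the normalization.

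The main obstacle is controlling the plug-in error $\hat\eta_i - \sigma(\xb_i,\thetab)\epsilon_i$ simultaneously at the $\sqrt{n}$ scale needed for asymptotic normality and at the $n\lambda_T$ scale needed for the KKT check on $T^C$, all while carrying the exponential weights $\exp(-\xb_i'\thetab)$ that multiply the stage-2 pseudolikelihood. This is where the sharp sup-norm rate of Theorem~\ref{thm:ols_mean}, the sparsity exponent $\alpha_T < 1/3$, and Assumption~5 interact delicately; the ranges for $\lambda_T$ and for $\min_{j \in T}|\theta_j|$ are tuned precisely so that the KKT slack dominates the propagated first-stage error while staying much smaller than the smallest nonzero coordinate of $\thetab$.
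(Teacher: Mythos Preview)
Your high-level architecture is exactly the paper's: build the oracle restricted estimator on $T$, verify the three KKT-type conditions so that $(\hat\thetab_T,\zero_{T^C})$ is a strict local minimizer of the penalized program, and read off the limit via a one-step Taylor expansion plus CLT. That part is fine.

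The gap is in the one place you flag as ``the technical heart'': controlling the plug-in error $\hat\eta_i^2-\eta_i^2$ at the $\sqrt{n}$ scale. You propose to feed in the $\ell_\infty$ rate on $\hat\betab_S-\betab_S$ from Theorem~\ref{thm:ols_mean} and assert that, under $\alpha_S,\alpha_T<1/3$, the two perturbation terms in your gradient decomposition are of strictly smaller order than $\sqrt{n|T|}$. This does not go through. From the sup-norm rate you only get
\[
\max_{i}|\xb_i'(\betab-\hat\betab)| \;\le\; \|\xb_{iS}\|_2\,\|\hat\betab_S-\betab_S\|_2 \;=\; \tilde O_\PP\bigl(n^{\alpha_S-1/2}\bigr),
\]
and hence $\max_i|\hat\eta_i^2-\eta_i^2| = \tilde O_\PP(n^{\alpha_S-1/2})$. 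Plugging this into your cross term (or equivalently into $\hat U_n-U_n=n^{-1/2}\sum_i(\hat\eta_i^2-\eta_i^2)e^{-\xb_i'\thetab}\xb_{iT}$) gives a bound of order $n^{\alpha_S+\alpha_T/2}$ up to subpolynomial factors, which is never $o_\PP(1)$---not even under your extra hypothesis $\alpha_S<1/3$ (which, incidentally, is not part of the theorem's assumptions; Assumption~4 allows any $\alpha_S\in(0,1)$). Because $\hat\betab_S-\betab_S$ depends on all of $\epsilonb$, there is no martingale structure to exploit in your cross term either, so a smarter concentration argument does not come for free.

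The paper resolves this with a different and essential idea: it does \emph{not} use the coefficient rate at all, but instead uses the projection identity $\hat\etab=(I-\Mb)\etab$, with $\Mb=\Xb_S(\Xb_S'\Xb_S)^{-1}\Xb_S'$, and proves a leverage bound $\max_i\|\Mb_i\|=O_\PP(n^{-(3-\alpha_S)/4}\log n)$. That yields the much sharper uniform rate $\max_i|\hat\eta_i^2-\eta_i^2|=o_\PP(n^{-1/2-\gamma})$ for every $\gamma<(1-\alpha_S)/4$, which is precisely the $n^{-1/2}$-\emph{plus} rate needed to make the score and Hessian perturbations $o_\PP(1)$ (the paper then invokes Pollard's quadratic-approximation lemma to pass from these to the estimator, in place of your compact-ball argument). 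Without this hat-matrix step your program stalls at the gradient comparison; it is the missing ingredient rather than a stylistic alternative.
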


% \begin{proof}[Proof sketch.]
%   We derive necessary and sufficient conditions for $\hat\beta$ to
%   be a strict local minimizer of the program \eqref{eq:stage1} akin
%   to those in Theorem~1 in \cite{fan09nonconcave}.  We show that the
%   PML asymptotically equivalent to the maximum likelihood estimator
%   through a lemma from \cite{pollard93} and by following arguments
%   similar \cite{jobsonFuller}.  As opposed to the classical
%   asymptotic theory, we implicitely construct finite sample results
%   because $|T|$ is allowed to grow with $n$.
% \end{proof}

With the convergence result of $\hat\thetab$ we can prove consistency
and asymptotic normality of the weighted estimator $\hat\betab$ in
\eqref{eq:stage3}. 
\begin{theorem}
  \label{thm:wls_mean}

  Suppose that the assumptions (1)-(5) are satisfied and that there
  exists an estimator $\hat\thetab$ satisfying $\norm{\hat\thetab -
    \thetab}_2 = \Ocal(r_n)$, for a sequence $r_n \rightarrow 0$ and
  $\supp(\hat\thetab) = \supp(\thetab)$. Furthermore, assume that
  $\lambda_S \geq c_1\sqrt{\log(p)\exp(\sqrt{c_2\log(n)})/n}$,
  $\min_{j \in [S]} |\beta_j| \gg \lambda_S \gg
  c_3r_n\exp(\sqrt{c_2\log(n)})\log(n)$ and $\log(p) =
  \Ocal(n^{\alpha_0})$ for some $\alpha_0 \in (0,1)$. Then there is a
  strict local minimizer $\hat\betab_w = (\hat\betab_{w,S}',
  \zero_{S^C})$ of \eqref{eq:stage3} that satisfies
  \begin{equation}
    \label{eq:bound_stage_3}
    \norm{\hat\betab_{w,S} - \betab_S}_\infty \leq c_3 r_n
    \exp(\sqrt{c_2\log(n)}) \log(n)
  \end{equation}
  for some positive constants $c_1, c_2$, and $c_3$ and sufficiently
  large $n$.

  Furthermore, for any fixed $\ab \in \RR^{s}$ with $\norm{\ab}_2 = 1$
  the following weak convergence holds
  \begin{equation}
    \label{eq:variance_stage_3}
    \frac{\sqrt{n}}{\zeta_w}\ab'(\hat\betab - \beta)
    \stackrel{D}{\longrightarrow} \Ncal(0, 1)
  \end{equation}
  where $\zeta_w^2 =
  \ab'(\EE\Db_{SS})^{-1}\ab$.
\end{theorem}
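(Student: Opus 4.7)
The plan is to mirror the oracle-property argument used for Theorem~\ref{thm:ols_mean}, but with the extra wrinkle that the weights $\hat\sigma_i = \exp(\xb_i'\hat\thetab/2)$ are estimated rather than known. As in the standard folded-concave analysis \citep{fan09nonconcave,lv09unified}, I would use a ``construction'' argument: build an oracle estimator restricted to $S$, show it lies in a small $\ell_\infty$-ball around $\betab_S$, and then verify the subdifferential KKT conditions of \eqref{eq:stage3} on $S^C$, so that the chosen $\hat\betab_w$ is indeed a strict local minimizer with the claimed support.

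For the first (oracle) step, define
\[
  \hat\betab_{w,S} = \arg\min_{\betab_S \in \RR^{s}}\,
  \sum_{i \in [n]} \frac{(y_i - \xb_{i,S}'\betab_S)^2}{\hat\sigma_i}
  + 2n \sum_{j \in S} \pen_{\lambda_S}'(|\beta_j^{(0)}|)|\beta_j|,
\]
using the local linear approximation \eqref{eq:scad_sub} around a pilot value $\beta_j^{(0)}$ inside the neighborhood $\min_{j\in S}|\beta_j| \gg \lambda_S$. The first-order condition for $\hat\betab_{w,S}$ can be written as $\hat\Db_{SS}(\hat\betab_{w,S} - \betab_S) = n^{-1}\Xb_S'\hat W\epsilonb_\sigma + $ penalty-bias, where $\hat W = \diag(\exp(-\xb_i'\hat\thetab))$ and $\epsilonb_\sigma = \sigma(\xb_i,\thetab)\epsilon_i$. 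Writing $\hat W = W + (\hat W - W)$, with $W = \diag(\exp(-\xb_i'\thetab))$, Assumption~5 controls $\hat\Db_{SS}^{-1}$ uniformly in probability, a subgaussian tail bound on $\xb_{i,S}'\epsilon_i\exp(-\xb_i'\thetab/2)$ gives the $\sqrt{\exp(\sqrt{c_2\log n})\log s/n}$ contribution to $\|\hat\betab_{w,S}-\betab_S\|_\infty$, and the perturbation $\hat W - W$ contributes the $r_n\exp(\sqrt{c_2\log n})\log(n)$ term after a Taylor expansion of $\exp(-\xb_i'\hat\thetab)$ around $\thetab$ combined with $\|\hat\thetab - \thetab\|_2 = O(r_n)$ and subgaussian tail control of $\xb_i'(\hat\thetab - \thetab)$. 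This establishes \eqref{eq:bound_stage_3}. The SCAD derivative vanishes on the ``flat'' region, so once $\hat\betab_{w,S}$ is bounded away from zero, the penalty bias is zero. Verifying the KKT condition on $S^C$ reduces to showing $\|n^{-1}\Xb_{S^C}'\hat W(\epsilonb_\sigma - \Xb_S(\hat\betab_{w,S}-\betab_S))\|_\infty < \lambda_S$, which follows by union-bounding the subgaussian tails under $\log p = O(n^{\alpha_0})$ and the choice of $\lambda_S$.

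For asymptotic normality, I would start from the normal equation
\[
\sqrt{n}\,\ab'(\hat\betab_{w,S} - \betab_S) = \ab'\hat\Db_{SS}^{-1}\Bigl(n^{-1/2}\Xb_S'W\epsilonb_\sigma\Bigr) + R_n,
\]
where $R_n$ collects (i) the penalty bias (which vanishes by the SCAD flat-region argument as above) and (ii) the perturbation from $\hat W - W$. Under Assumption~5, $\hat\Db_{SS} \to \EE\Db_{SS}$ in operator norm, and the leading term is a linear combination of independent zero-mean subgaussian variables with variance $\ab'\EE\Db_{SS}\ab$ after normalization by $(\EE\Db_{SS})^{-1}$; a Lindeberg CLT then yields the $\Ncal(0,1)$ limit with $\zeta_w^2 = \ab'(\EE\Db_{SS})^{-1}\ab$. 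It remains to show $R_n = o_\PP(1)$, which reduces to bounding $n^{-1/2}\ab'\hat\Db_{SS}^{-1}\Xb_S'(\hat W - W)\epsilonb_\sigma$; a second-order Taylor expansion of $\exp(-\xb_i'\hat\thetab)$ plus $r_n \to 0$ and the subgaussian moment bounds will suffice, provided $r_n$ shrinks fast enough relative to $n^{-1/2}$.

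The main obstacle is precisely this coupling between $\hat\thetab$ and the data used in \eqref{eq:stage3}: because $\hat\thetab$ depends on $\hat\betab$ and hence on $\Xb,\yb$, the residual-reweighting introduces nontrivial dependence in the weighted score $n^{-1/2}\Xb_S'\hat W\epsilonb_\sigma$. I would handle this by working on the high-probability event $\{\supp(\hat\thetab) = T,\ \|\hat\thetab-\thetab\|_2 \leq C r_n\}$ provided by Theorem~\ref{thm:var_est} and then conditioning on $\hat\thetab$, so that inside the expansion $\exp(-\xb_i'\hat\thetab) = \exp(-\xb_i'\thetab)(1 - \xb_i'(\hat\thetab-\thetab) + O((\xb_i'(\hat\thetab-\thetab))^2))$ the randomness from $\hat\thetab - \thetab$ can be treated as a fixed (but small) deterministic perturbation. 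The subpolynomial factor $\exp(\sqrt{c_2\log n})$ in \eqref{eq:bound_stage_3}, as in Theorem~\ref{thm:ols_mean}, comes from taming $\max_i \exp(-\xb_i'\thetab)$ via a truncation argument on the subgaussian linear forms $\xb_i'\thetab$.
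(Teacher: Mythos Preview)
Your proposal is essentially correct and tracks the paper's own argument closely: construct the oracle weighted least-squares estimator restricted to $S$, control $\|\hat\betab_{w,S}-\betab_S\|_\infty$ by Taylor-expanding the estimated weights around $\thetab$ and working on high-probability events for the subgaussian linear forms (whence the subpolynomial $\exp(\sqrt{c_2\log n})$ factor), then verify the strict-local-minimizer conditions of \citet{fan09nonconcave} on $S^C$; for \eqref{eq:variance_stage_3} the paper likewise isolates the leading term $n^{-1}\Db_{SS}^{-1}\Xb_S'\Wb\epsilonb$ and applies a Lyapunov CLT. Two presentational differences: (i) the paper takes the oracle estimator directly as the closed-form WLS $\hat\betab_{w,S}=n^{-1}\hat\Db_{SS}^{-1}\Xb_S'\hat\Wb^2\yb$ rather than through your LLA route, though these agree once the SCAD derivative vanishes on the flat region; (ii) the paper decomposes $\hat\betab_{w,S}-\betab_S$ into four pieces by splitting both $(\hat\Db_{SS}^{-1}-\Db_{SS}^{-1})$ and $(\hat\Wb^2-\Wb^2)$ simultaneously, which is just a more explicit version of your $\hat W=W+(\hat W-W)$ device.

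One spot where you should be more explicit. For the $S^C$ KKT check you write that it ``follows by union-bounding the subgaussian tails,'' but the columns $\Xb_j$, $j\in S^C$, are correlated with $\Xb_S$ and with the weights $\hat W$, so a direct tail bound on $n^{-1}\Xb_j'\hat W(\epsilonb_\sigma-\Xb_S(\hat\betab_{w,S}-\betab_S))$ is not immediate. The paper first writes $\Xb_j=\Xb_S\taub_S+\Eb_j$ with $\Eb_j$ uncorrelated with $\Xb_S$ (Lemma~\ref{lem:regression} and Lemma~\ref{lem:Ej_subgauss}); the $\Xb_S\taub_S$ part is annihilated by the weighted projection $\Pb_{w,S}^{\perp}$, leaving $n^{-1}\Eb_j'\hat\Wb\Pb_{w,S}^{\perp}\hat\Wb\Wb^{-1}\epsilonb$, which is then split again via $\hat\Wb-\Wb$ and bounded coordinate-wise. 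Your sketch implicitly requires this decoupling step; without it the union bound does not directly apply. Also, a small imprecision: the ``clean'' term $n^{-1}\Db_{SS}^{-1}\Xb_S'\Wb^2\etab=n^{-1}\Db_{SS}^{-1}\Xb_S'\Wb\epsilonb$ actually contributes $O(\sqrt{\log s/n})$ without the $\exp(\sqrt{c_2\log n})$ factor, since the true weights cancel the heteroscedasticity; the subpolynomial factor enters only through the perturbation terms involving $\hat\Wb-\Wb$, and it is these that dominate and give the rate \eqref{eq:bound_stage_3}.
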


Theorem~\ref{thm:wls_mean} establishes convergence of the weighted
estimator $\hat\betab_W$ in \eqref{eq:stage3} and the model selection
consistency. The rate of convergence depends on the rate of
convergence of the variance estimator, $r_n$. From
Theorem~\ref{thm:var_est}, we show the parametric rate of convergence
for $\hat\theta_S$. The second result of Theorem~\ref{thm:wls_mean}
states that the weighted estimator $\hat\betab_{w,S}$ is
asymptotically normal, with the same asymptotic variance as the
generalized least squares estimator which knows the true model and
variance function $\sigma(\xb, \thetab)$.

% \begin{proof}[Proof sketch.]
%   While the structure of the proof is similar to that of Theorem
%   \ref{thm:ols_mean}, the additional biases introduced by the
%   estimation of $\hat \theta$ and its role in the wieghted procedure
%   make this proof much more technical.
% \end{proof}

%%% Local Variables:
%%% TeX-master: "paper"
%%% End:
\section{Monte-Carlo Simulations}
\label{sec:simulation}

\begin{table}[t]
{
\hfill{}
\begin{tabular}{l@{\hspace{0.3cm}}ccc}
  &  $\norm{\theta - \hat\theta}_2$  &
     ${\sf Pre}_{\theta}$ & ${\sf Rec}_{\theta}$ \\
\cline{2-4}
\vspace{-0.2cm}
\\
&\multicolumn{3}{c}{ \underline{$\rho = 0$} } \\
\vspace{-0.2cm}
\\
HHR-AIC  & 0.59(0.13) & 0.4(0.17) & 1.00(0.00) \\
HIPPO-AIC & 0.26(0.15) & 0.6(0.22) & 1.00(0.00) \\
HHR-BIC & 0.59(0.13) & 0.39(0.16) & 1.00(0.00) \\
HIPPO-BIC & 0.26(0.15) & 0.59(0.22) & 1.00(0.00) \\
\\
%VAR & 0.15(0.10) & 0.99(0.04) & 1.00(0.00) \\
%\\
&\multicolumn{3}{c}{ \underline{$\rho = 0.5$} } \\
\vspace{-0.2cm}
\\
HHR-AIC  & 0.32(0.12) & 0.68(0.21) & 1.00(0.00) \\
HIPPO-AIC & 0.38(0.22) & 0.69(0.25) & 1.00(0.03) \\ 
HHR-BIC & 0.32(0.12) & 0.68(0.21) & 1.00(0.00) \\
HIPPO-BIC & 0.38(0.22) & 0.69(0.25) & 0.99(0.03) \\
%\\
%VAR & 0.18(0.17) & 0.99(0.043) & 0.99(0.047) \\
%\\
\hline
\hline
\end{tabular}
}
\hfill{}
\caption{
  Mean (sd) performance of HHR and \acro under the model in
  Example~1 (averaged over 100 independent runs). The mean parameter
  $\betab$ is assumed to be known.
}
\label{tb:zero_mean}
\end{table}

In this section, we conduct two small scale simulation studies to
demonstrate finite sample performance of \acro. We compare it to the
HHR procedure \citep{daye11high}.

Convergence of the parameters is measured in the $\ell_2$ norm,
$\norm{\hat \beta - \beta}$ and $\norm{\hat \theta - \theta}$.  We
measure the identification of the support of $\betab$ and $\thetab$
using precision and recall. Let $\hat S$ denote the estimated set of
non-zero coefficients of $S$, then the precision is calculated as
${\sf Pre}_{\beta} := |\hat S \cap S|/|\hat S|$ and the recall as
${\sf Rec}_{\beta} := |\hat S \cap S|/|S|$. Similarly, we can define
precision and recall for the variance coefficients. We report results
averaged over 100 independent runs.

\begin{table*}[t]
{
\hfill{}
\begin{tabular}{ll@{\hspace{1cm}}ccc@{\hspace{1cm}}ccc}
& \#it & $\norm{\beta - \hat\beta}_2$  &
     ${\sf Pre}_{\beta}$ & ${\sf Rec}_{\beta}$ 
   & $\norm{\theta - \hat\theta}_2$  &
     ${\sf Pre}_{\theta}$ & ${\sf Rec}_{\theta}$ \\
\cline{3-8}
\vspace{-0.2cm}
\\
&&\multicolumn{6}{c}{\underline{$n = 200$}} \\
\vspace{-0.2cm}
\\

HHR-AIC  & 1st & 0.78(0.52) & 0.44(0.22) & 1.00(0.00) 
           & 2.10(0.11) & 0.25(0.10) & 0.54(0.16) \\
     & 2nd & 0.31(0.13) & 0.88(0.15) & 1.00(0.00) 
           & 1.80(0.16) & 0.29(0.07) & 0.71(0.14) \\
HIPPO-AIC  & 1st & 0.66(0.84) & 0.75(0.29) & 1.00(0.02) 
           & 2.00(0.16) & 0.20(0.10) & 0.52(0.16) \\
     & 2nd & 0.08(0.07) & 0.84(0.24) & 1.00(0.00) 
           & 1.50(0.30) & 0.30(0.11) & 0.75(0.12) \\
\\
HHR-BIC  & 1st & 0.77(0.48) & 0.58(0.17) & 1.00(0.00) 
           & 2.10(0.10) & 0.41(0.18) & 0.45(0.14) \\
     & 2nd & 0.31(0.13) & 0.89(0.13) & 1.00(0.00) 
           & 1.90(0.16) & 0.38(0.15) & 0.65(0.17) \\
HIPPO-BIC  & 1st & 0.70(0.83) & 0.80(0.25) & 0.99(0.03) 
           & 2.00(0.14) & 0.39(0.18) & 0.50(0.17) \\
     & 2nd & 0.08(0.06) & 0.97(0.07) & 1.00(0.00) 
           & 1.60(0.28) & 0.44(0.16) & 0.72(0.14) \\
\\
%VAR &      & 0.02(0.01) & 1.00(0.00) & 1.00(0.00) 
%           & 1.20(0.52) & 0.90(0.14) & 0.75(0.14) \\
&&\multicolumn{6}{c}{\underline{$n = 400$}} \\
\vspace{-0.2cm}
\\
HHR-AIC  & 1st & 0.59(0.37) & 0.58(0.26) & 1.00(0.00) 
               & 1.90(0.11) & 0.36(0.14) & 0.72(0.18) \\
     & 2nd & 0.30(0.24) & 0.98(0.06) & 1.00(0.00) 
           & 1.70(0.16) & 0.43(0.13) & 0.81(0.16) \\
HIPPO-AIC  & 1st & 0.44(0.54) & 0.87(0.22) & 1.00(0.00) 
                 & 1.80(0.18) & 0.28(0.10) & 0.67(0.15) \\
     & 2nd & 0.06(0.29) & 0.97(0.12) & 1.00(0.02) 
           & 1.00(0.31) & 0.56(0.18) & 0.93(0.09) \\
\\
HHR-BIC  & 1st & 0.59(0.37) & 0.66(0.20) & 1.00(0.00) 
           & 1.90(0.11) & 0.46(0.18) & 0.66(0.20) \\
     & 2nd & 0.30(0.23) & 0.98(0.06) & 1.00(0.00) 
           & 1.70(0.17) & 0.46(0.13) & 0.80(0.17) \\
HIPPO-BIC  & 1st & 0.46(0.58) & 0.89(0.19) & 1.00(0.01) 
           & 1.80(0.18) & 0.39(0.17) & 0.65(0.17) \\
     & 2nd & 0.06(0.29) & 0.99(0.06) & 1.00(0.02) 
           & 1.00(0.31) & 0.63(0.20) & 0.92(0.09) \\
%\\
%VAR &      & 0.41(1.60) & 1.00(0.00) & 0.94(0.24) 
%           & 0.55(0.61) & 0.97(0.11) & 0.89(0.25) \\
\hline
\hline
\end{tabular}
}
\hfill{}
\caption{
  Mean (sd) performance of HHR and \acro under the model in
  Example~2 (averaged over 100 independent runs). We report estimated
  models after the first and second iteration. 
}
\label{tb:exper2}
\end{table*}

\subsection{Example 1} 

Assume that the data is generated iid from the following model $Y =
\sigma(\Xb) \epsilon$ where $\epsilon$ follows a standard normal
distribution and the logarithm of the variance is given by
\[
\log \sigma(\Xb)^2 = X_1 + X_2 + X_3.
\]
The covariates associated with the variance are jointly normal with
equal correlation $\rho$, and marginally $\Ncal(0,1)$. The remaining
covariates, $X_4, \ldots, X_p$ are iid random variables following the
standard Normal distribution and are independent from $(X_1, X_2,
X_3)$. We set $(n, p) = (200, 2000)$ and use $\rho = 0$ and $\rho =
0.5$. Estimation procedures know that $\betab = \zero$ and we only
estimate the variance parameter $\thetab$. This example is provided to
illustrate performance of the penalized pseudolikelihood estimators in
an idealized situation. When the mean parameter needs to be estimated
as well, we expect the performance of the procedures only to get
worse. Since the mean is known, both HHR and \acro only solve the
optimization procedure in \eqref{eq:stage2}, HHR with the
$\ell_1$-norm penalty and \acro with the SCAD penalty, without
iterating between \eqref{eq:stage3} and \eqref{eq:stage2}.

Table~\ref{tb:zero_mean} summarizes the results. Under this toy model,
we observe that HIPPO performs better than HHR when the correlation
between the relevant predictors is $\rho = 0$. However, we do not
observe the difference between the two procedures when $\rho =
0.5$. The difference between the AIC and BIC is already visible in
this example when $\rho = 0$. The AIC tends to pick more complex
models, while the BIC is more conservative and selects a model with
fewer variables. 

\subsection{Example 2} 

The following non-trivial model is borrowed from \citet{daye11high}.
The response variable $Y$ satisfies 
\[
Y = \beta_0 + \sum_{j \in [p]} X_j \beta_j + 
\exp(\theta_0 + \sum_{j \in [p]} X_j\theta_j) \epsilon 
\]
with $p=600$, $\beta_0 = 2$, $\theta_0 = 1$,
\[
\betab_{[12]} = (3, 3, 3, 1.5, 1.5, 1.5, 0, 0, 0, 2, 2, 2)',
\]
\[
\thetab_{[15]} = (1, 1, 1, 0, 0, 0, 0.5, 0.5, 0.5, 0, 0, 0, 0.75, 0.75, 0.75)',
\]
and the remainder of the coefficients are $0$. The covariates are
jointly Normal with ${\rm cov}(X_i, X_j) = 0.5^{|i-j|}$ and the error
$\epsilon$ follows the standard Normal distribution.  This is a more
realistic model than the one described in the previous example. We set
$p = 600$ and the number of samples $n = 200$ and $n = 400$.

Table~\ref{tb:exper2} summarizes results of the simulation. We observe
that HIPPO consistently outperforms HHR in all scenarios. Again, a
general observation is that the AIC selects more complex models
although the difference is less pronounced when the sample size
$n=400$. Furthermore, we note that the estimation error significantly
reduces after the first iteration, which demonstrates final sample
benefits of estimating the variance. Recall that
Theorem~\ref{thm:ols_mean} proves that the estimate $\hat\betab$
consistently estimates the true parameter $\betab$. However, it is
important to estimate the variance parameter $\thetab$ well, both in
theory (see Theorem~\ref{thm:wls_mean}) and practice.

%%% Local Variables:
%%% TeX-master: "paper"
%%% End:
\section{Real Data Application}

Forecasting the gross domestic product (GDP) of a country based on
macroeconomic indicators is of significant interest to the economic
community.  We obtain both the country GDP figures (specifically we
use the GDP per capita using current prices in units of a `national
currency') and macroeconomic variables from the International Monetary
Fund's World Economic Outlook (WEO) database.  The WEO database
contains records for macroeconomic variables from 1980 to 2016 (with
forecasts).

To form our response variable, $\yb_{i,t}$, we form log-returns of the
GDP for each country ($i$) for each time point ($t$) after records
began and before the forecasting commenced (each country had a
different year at which forecasting began).  After removing missing
values, we obtained 31 variables that can be grouped into a few broad
categories: balance of payments, government finance and debt,
inflation, and demographics.  We apply various transformations,
including lagging and logarithms forming the vectors $\xb_{i,t}$.  We
fit the heteroscedastic AR(1) model with HIPPO.
\[
\yb_{i,t} = \xb_{i,t-1}' \betab + 
     \exp (\xb_{i,t-1}' \thetab) \epsilon_{i,t}
\]

In order to initially assess the heteroscedasticity of the data, we
form the LASSO estimator with the LARS package in R selecting with
BIC.  It is common practice when diagnosing heteroscedasticity to plot
the studentized residuals against the fitted values.  We bin the bulk
of the samples into three groups by fitted values, and observe the
box-plot of each bin by residuals (Figure \ref{fig:studres}).  It is
apparent that there is a difference of variances between these bins,
which is corroborated by performing a F-test of equal variances across
the second and third bins (p-value of $4\times10^{-6}$).  We further
observe differences of variance between country GDP log returns.  We
analyzed the distribution of responses separated by countries: Canada,
Finland, Greece and Italy.  The p-value from the F-test for equality
of variances between the countries Canada and Greece is $0.008$, which
is below even the pairwise Bonferroni correction of $0.0083$ at $0.05$
significance level.
This demonstrates heteroscedasticity in the WEO dataset, and we are justified in fitting non-constant variance.

We compare the results from HIPPO and HHR when applied to the WEO
data set.  The tuning parameters were selected with BIC over a grid for
$\lambda_S$ and $\lambda_T$.  The metrics used to compare the
algorithms are mean square error (MSE) defined by $\frac{1}{n} \sum_i
(y_{i,t} - \hat y_{i,t})^2$, the partial prediction score defined as
the average of the negative log likelihoods, and the number of
selected mean parameters and variance parameters.  We perform
$10$-fold cross validation to obtain unbiased estimates of these
metrics.  In Table \ref{tb:imf} we observe that HIPPO outperforms HHR
in terms of MSE and partial prediction score.

\begin{table}[t]
{
\hfill{}
\begin{tabular}{c@{\hspace{0.6cm}}cc}
& \underline{\acro} & \underline{HHR} \\ 
\vspace{-0.2cm}
\\
MSE         & 0.0089 & 0.0091 \\
$-\ell(\yb, \Xb; \hat\betab, \hat\thetab)$    & 0.4953 & 0.6783 \\
 $|\widehat S|$ & 5.4    & 8.9    \\
 $|\widehat T|$ & 8.2    & 5.1    \\
\hline
\hline
\end{tabular}
}
\hfill{}
\caption{
  Performance of \acro and HHR on WEO data averaged over 10 folds.
}
\label{tb:imf}
\end{table}

%%% Local Variables:
%%% TeX-master: "paper"
%%% End:

\section{Discussion}

We have addressed the problem of statistical inference in
high-dimensional linear regression models with heteroscedastic errors.
Heteroscedastic errors arise in many applications and industrial
settings, including biostatistics, finance and quality control in
manufacturing. We have proposed HIPPO for model selection and
estimation of both the mean and variance parameters under a
heteroscedastic model. HIPPO can be deployed naturally into an
existing data analysis work-flow. Specifically, as a first step, a
statistician performs penalized estimation of the mean parameters and
then, as a second step, tests for heteroscedasticity by running the
second step of HIPPO. If heteroscedasticity is discovered, HIPPO can
then be used to solve penalized generalized least squares
objective. Furthermore, HIPPO is well motivated from the penalized
pseudolikelihood maximization perspective and achieves the oracle
property in high-dimensional problems. 

Throughout the paper, we focus on a specific parametric form of the
variance function for simplicity of presentation. Our method can be
extended to any parametric form, however, the assumptions will become
more cumbersome and the particular numerical procedure would
change. It is of interest to develop general unified framework for
estimation of arbitrary parametric form of the variance function. 
Another open research direction includes non-parametric estimation of
the variance function in high-dimensions, which could be achieved with
sparse additive models \citep[see ][]{ravikumar2009sparse}.

% Similar to John Daye, we can modify our procedure to handle outliers
% by setting Z = I.

%%% Local Variables:
%%% TeX-master: "paper"
%%% End:

\begin{figure}[p]
\centering
\includegraphics[width=7cm]{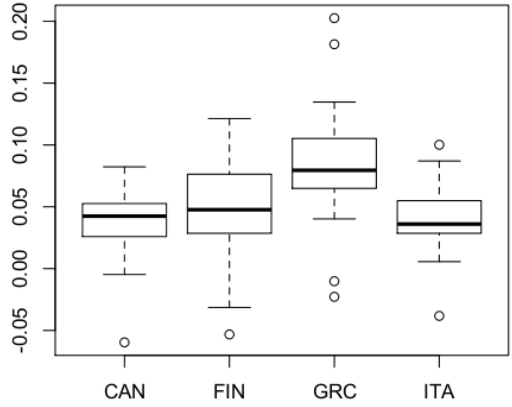}
\caption{A box-plot of the GDP log-returns for the 4 countries with
  the most observed time points (Canada, Finland, Greece, and Italy).}
\label{fig:country}
\end{figure}

\begin{figure}[p]
\centering
\includegraphics[width=7cm]{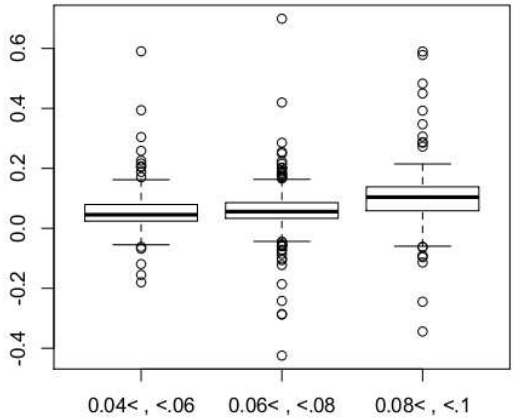}
\caption{A box-plot of the studentized residuals binned by LASSO
  predicted $y_{i,t}$.  Only the segment of the predicted response
  with the bulk of the samples was binned; the breaks in the bins are
  at $0.04$, $0.06$, $0.08$, and $0.1$.}
\label{fig:studres}
\end{figure}

\section*{Acknowledgements} MK is partially supported through the grants NIH R01GM087694 and AFOSR FA9550010247.
JS is partially supported by AFOSR under grant FA95501010382.

%\newpage\setcounter{page}{1}

\bibliographystyle{plainnat}
\bibliography{biblio}

\section{Appendix}

In the appendix, we collect some well known results and provide proofs
for the results in the main text.

For readers convenience, we summarize the notation again. We use $[n]$
to denote the set $\{1,\ldots,n\}$.  For any index set $S \subseteq
[p]$, we denote $\betab_S$ to be the subvector containing the
components of the vector $\betab$ indexed by the set $S$, and $\Xb_S$
denotes the submatrix containing the columns of $\Xb$ indexed by
$S$. For a vector $\ab \in \RR^n$, we denote ${\rm supp}(\ab) = \{j\
:\ a_j \neq 0\}$ the support set, $\norm{\ab}_q$, $q \in (0,\infty)$,
the $\ell_q$-norm defined as $\norm{\ab}_q = (\sum_{i\in[n]}
a_i^q)^{1/q}$ with the usual extensions for $q \in \{0,\infty\}$, that
is, $\norm{\ab}_0 = |{\rm supp}(\ab)|$ and $\norm{\ab}_\infty =
\max_{i\in[n]}|a_i|$. For notational simplicity, we denote
$\norm{\cdot} = \norm{\cdot}_2$ the $\ell_2$ norm.  The unit sphere of
$\ell_2^n = (\RR^n, \norm{\cdot})$ is denoted by $\Scal^{n-1}$. The
canonical bases of $\ell_2^n$ we denote by $\eb_1, \ldots, \eb_n$. For
a matrix $\Ab \in \RR^{n \times p}$ we denote $\opnorm{\Ab}{2} =
\sup\{\norm{\Ab x}:\norm{x} = 1\}$ the operator norm and $\norm{\Ab}_F
= \sqrt{\sum_{i \in [n]}\sum_{j \in [p]} a_{ij}^2}$ the Frobenius
norm. For a symmetric matrix $\Ab \in \RR^{n \times n}$, we use
$\Lambda_{\min}(\Ab)$ to denote $\Lambda_{\max}(\Ab)$ the smallest and
largest eigenvalue respectively.

\subsection{Subgaussian random variables}
\label{sec:subgaussian}

In this section, we define subgaussian random variables and state a
few well known properties.

We denote $\norm{X}_{L_p}$ the $L_p$ norm of a random variable $X$,
i.e., $\norm{X}_{L_p} = (EE|X|^p)^{1/p}$. Define $\Psi_\alpha$ the
Orlitz function $\Psi_\alpha(x) = \exp(|x|^\alpha) - 1$, $\alpha \geq
1$. Using the Orlitz function, we can define the Orlitz space of real
valued random variables, $L_{\Psi_2}$, equipped with the norm
\[
  \norm{X}_{\Psi_\alpha} = \inf \{c>0 : \EE\exp(|X/c|^\alpha)\leq 2 \}.
\]
We will focus on the particular choice of $\alpha = 2$. Define
$B_{\Psi_2}(\gamma)$ the set of real-valued symmetric random variables
satisfying
\begin{equation}
  \label{eq:def_b_gamma}
  1 \leq \norm{X}_{L_2} \text{ and } \norm{x}_{\Psi_2} \leq \gamma.
\end{equation}
For $X \in B_{\Psi_2}(\gamma)$ we have  a good control of the tail
probability 
\begin{equation}
  \label{eq:tail_bound_sub_gauss}
  \PP[X \geq t] \leq \exp(- t^2 / \gamma^2),
\end{equation}
which can be obtained using the Markov inequality
\[
2\PP[X \geq t] 
\leq \PP[|X| \geq t] 
\leq \frac{\EE\exp(X^2/\gamma^2)}{\exp(t^2/\gamma^2)}
\leq 2\exp(-t^2/\gamma^2)
\]
since $X$ is symmetric and $\EE\exp(X^2/\gamma^2) \leq 2$.

The space $L_{\Psi_2}$ is the set of subgaussian random
variables. A real-valued random variable $X$ is called subgaussian
with parameter $\nu$, $\nu >0$, if
\begin{equation}
  \label{eq:nu-subgaussian}
  \EE\exp(tX) \leq \exp(\nu^2t^2/2), \text{ for all } t > 0.
\end{equation}
It follows from this bound on the moment generating function that the
following bound on the tail probability holds
\begin{equation}
  \label{eq:tail_bound_nu_sub_gaussian}
  \PP[X \geq t] \leq \exp(-t^2 / (2\nu^2)) \text{ for any } t \geq 0.
\end{equation}
We also have that $X \in B_{\Psi_2}(\mu)$ is subgaussian with
parameter $\sqrt{2}\mu$ by direct calculation.

The following few facts are useful.
\begin{lemma}
  \label{lem:sum_sub_gaussian}
  Let $\gamma_i \geq 1$ and $X_i \in B_{\Psi_2}(\gamma_i)$, $i=1,\ldots,n$,
  be independent variables, then for any $a_1, \ldots, a_n \in \RR$,
  $\sum_{i \in [n]} a_iX_i$ is subgaussian with parameter
  $\sqrt{2\sum_{i\in[n]}\gamma_i^2a_i^2}$. 
\end{lemma}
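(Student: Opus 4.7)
The plan is to chain together two standard facts about subgaussian random variables: the translation from the Orlicz-norm bound to a moment-generating-function bound, and the tensorization of MGF bounds under independence.

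First I would invoke the observation already recorded in the excerpt that any $X \in B_{\Psi_2}(\gamma)$ is subgaussian with parameter $\sqrt{2}\gamma$, so that
\[
\EE \exp(t X_i) \leq \exp(\gamma_i^2 t^2) \qquad \text{for all } t \in \RR.
\]
Multiplying the argument by a scalar $a_i$ then gives $\EE \exp(t a_i X_i) \leq \exp(a_i^2 \gamma_i^2 t^2)$, since the subgaussianity bound is preserved under replacing $t$ by $a_i t$.

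Second, I would use independence to factor the MGF of the linear combination:
\[
\EE \exp\!\Bigl(t \sum_{i \in [n]} a_i X_i\Bigr)
= \prod_{i \in [n]} \EE \exp(t a_i X_i)
\leq \exp\!\Bigl(t^2 \sum_{i \in [n]} a_i^2 \gamma_i^2\Bigr).
\]
Matching this against the definition \eqref{eq:nu-subgaussian}, which requires the exponent to take the form $\nu^2 t^2 / 2$, yields $\nu^2 = 2 \sum_{i \in [n]} a_i^2 \gamma_i^2$, which is exactly the claimed parameter.

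There is no real obstacle in this proof; the only point that deserves a brief justification is the implication $X \in B_{\Psi_2}(\gamma) \Rightarrow \EE \exp(tX) \leq \exp(\gamma^2 t^2)$, and this is essentially a Taylor expansion of $\exp(tX)$ together with the bound $\EE \exp(X^2/\gamma^2) \leq 2$ combined with symmetry (so that odd moments vanish). If needed, I would record this step explicitly as a one-line derivation before proceeding to the independence argument; otherwise the proof is a direct two-step computation.
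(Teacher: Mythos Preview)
Your proposal is correct and matches the paper's proof essentially line for line: the paper also factors the MGF by independence, bounds each factor by $\exp(t^2 a_i^2 \gamma_i^2)$ using the already-stated fact that $X \in B_{\Psi_2}(\gamma)$ is subgaussian with parameter $\sqrt{2}\gamma$, and then reads off $\nu^2 = 2\sum_i a_i^2 \gamma_i^2$ from the definition.
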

\begin{proof}
  For any $t > 0$, we have
  \[
  \EE\exp(t\sum_{i\in[n]}a_iX_i) 
  = \prod_{i\in[n]}\EE\exp(ta_iX_i)
  \leq \prod_{i\in[n]}\exp(t^2a_i^2\gamma_i^2)
  = \exp(t^2\sum_{i\in[n]}a_i^2\gamma_i^2).
  \]
  The claim follows from \eqref{eq:nu-subgaussian}.
\end{proof}
\begin{lemma}
  \label{lem:norm_sub_gaussian}
  Let $\gamma \geq 1$ and $X_i \in B_{\Psi_2}(\gamma)$, $i=1,\ldots,n$,
  be independent variables, then for any $u \geq 0$,
  \begin{equation}
    \label{eq:norm_sub_gaussian}
    \PP[\sum_{i\in[n]} X_i^2 \geq u^2n] \leq \exp(n(\log(2) - (u/\gamma)^2)).
  \end{equation}
\end{lemma}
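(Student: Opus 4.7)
The plan is to run a standard Chernoff--Markov argument, exploiting the defining property of $B_{\Psi_2}(\gamma)$ in \eqref{eq:def_b_gamma}, which upon rearrangement gives $\EE \exp(X_i^2/\gamma^2) \leq 2$. This bound is tailor-made to be substituted directly into an exponential moment calculation, so I expect the proof to be short.

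First I would apply Markov's inequality to the nonnegative random variable $\exp\bigl(t \sum_i X_i^2\bigr)$ for some $t > 0$ to be chosen, yielding
\[
\PP\left[\sum_{i \in [n]} X_i^2 \geq u^2 n\right] \leq \exp(-t u^2 n) \, \EE \exp\!\left(t \sum_{i \in [n]} X_i^2\right).
\]
Using the independence of the $X_i$, the right-hand expectation factorizes as $\prod_{i \in [n]} \EE \exp(t X_i^2)$, so the problem reduces to controlling a single factor $\EE \exp(t X_i^2)$.

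The natural choice is $t = 1/\gamma^2$, because then each factor is bounded by $2$ by the definition of the Orlicz $\Psi_2$ norm and the assumption $X_i \in B_{\Psi_2}(\gamma)$. Substituting gives
\[
\PP\left[\sum_{i \in [n]} X_i^2 \geq u^2 n\right] \leq 2^n \exp(-u^2 n / \gamma^2) = \exp\!\bigl( n(\log 2 - (u/\gamma)^2)\bigr),
\]
which is the stated inequality.

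Since this is a textbook Chernoff bound that lines up perfectly with the $\Psi_2$-norm hypothesis, there is no real obstacle. The only subtle point is recognizing that the Orlicz bound $\EE\exp(X_i^2/\gamma^2) \leq 2$ is exactly the exponential moment needed at $t = 1/\gamma^2$; any other choice of $t$ would require a further estimate of $\EE\exp(tX_i^2)$ and produce a weaker constant.
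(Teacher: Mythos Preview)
Your proof is correct and matches the paper's own argument essentially line for line: apply Markov's inequality with $t=\gamma^{-2}$, factorize by independence, and bound each factor by $2$ using the Orlicz-norm assumption. The paper's version is terser (it omits the explicit factorization step), but the reasoning is identical.
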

\begin{proof}
  Using Markov inequality we have,
  \[
  \PP[\sum_{i\in[n]} X_i^2 \geq u^2n] 
  \leq \exp(-\gamma^{-2}u^2n)\EE\exp(\gamma^{-2}\sum_{i\in[n]}X_i^2)
  \leq 2^n\exp(-\gamma^{-2}u^2n),
  \]
  which concludes the proof.
\end{proof}

The notion of subgaussian random variable can be easily extended to
vector random variables. Let $\Zb \in \RR^p$ be random variable
satisfying $\EE \Zb = \zero$, $\EE \Zb\Zb' = \Ib_p$. The random
variable $\Zb$ is subgaussian with parameter $\nu$ if it satisfies
\begin{equation}
  \label{eq:mv_subgaussian}
  \sup_{\wb \in \Scal^{p-1}} \orlnorm{\dotp{\zb_i}{\wb}}{2} \leq \nu.
\end{equation}
Let $\Xb = \Sigmab^{1/2}\Zb$ where $\Sigmab \in \RR^{p \times p}$
positive definite matrix and $\Sigmab^{1/2}$ the symmetric matrix
square root. The following result is standard in multivariate
statistics \citep{anderson2003}.
\begin{lemma}
\label{lem:regression}
Let $S \subset [p]$ and $j \in [p]$, $j \not\in S$. Then 
\begin{equation}
  \label{eq:mv_regresion}
  X_j = \dotp{\Xb_S}{(\Sigmab_{SS})^{-1}\Sigmab_{Sj}} + E_j
\end{equation}
and $E_j$ is uncorrelated with $\Xb_S$.  
\end{lemma}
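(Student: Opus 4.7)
The statement is the classical fact that any coordinate of a centered Gaussian-type random vector decomposes into its best linear predictor from a subset of coordinates plus an uncorrelated residual. Since Assumption 1 gives $\EE \xb_i = \zero$, covariance and second moment coincide, and the plan is essentially a one-line construction plus a direct computation.

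My plan is to \emph{define} the residual
\[
E_j := X_j - \dotp{\Xb_S}{(\Sigmab_{SS})^{-1}\Sigmab_{Sj}},
\]
so that the decomposition \eqref{eq:mv_regresion} holds tautologically. Note that $\Sigmab_{SS}$ is invertible because Assumption~1 gives $\Lambda_{\min}(\Sigmab) \geq C_{\min} > 0$, and $\Sigmab_{SS}$ is a principal submatrix of a positive definite matrix, hence positive definite. All that then remains is to verify that $\EE[E_j \Xb_S] = \zero$.

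For this second step I would compute directly:
\[
\EE[E_j \Xb_S]
= \EE[X_j \Xb_S] - \EE[\Xb_S \Xb_S'](\Sigmab_{SS})^{-1}\Sigmab_{Sj}
= \Sigmab_{Sj} - \Sigmab_{SS}(\Sigmab_{SS})^{-1}\Sigmab_{Sj}
= \zero,
\]
using that $\EE[\Xb \Xb'] = \Sigmab$ (which follows from $\xb = \Sigmab^{1/2}\zb$ with $\EE\zb\zb' = \Ib$), and the block identifications $\EE[\Xb_S \Xb_S'] = \Sigmab_{SS}$ and $\EE[X_j \Xb_S] = \Sigmab_{Sj}$. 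Since $\EE X_j = 0$ and $\EE \Xb_S = \zero$, uncorrelatedness is equivalent to this second-moment identity.

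There is really no obstacle here: the only non-cosmetic point is recording why $\Sigmab_{SS}$ is invertible (which I will cite from Assumption~1) so that the definition of $E_j$ makes sense. This gives both claims of the lemma simultaneously and needs no probabilistic ingredient beyond the first two moments of $\Xb$.
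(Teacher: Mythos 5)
Your proof is correct and complete. Note that the paper itself gives no proof of this lemma at all: it simply cites it as a standard fact of multivariate statistics from Anderson (2003). Your argument --- define $E_j := X_j - \dotp{\Xb_S}{(\Sigmab_{SS})^{-1}\Sigmab_{Sj}}$ so the decomposition is tautological, then verify $\EE[E_j\Xb_S] = \Sigmab_{Sj} - \Sigmab_{SS}(\Sigmab_{SS})^{-1}\Sigmab_{Sj} = \zero$ and invoke the zero means to conclude uncorrelatedness --- is precisely the textbook argument being cited, so in effect you have supplied the proof the paper delegates to the reference. Your attention to the one genuinely non-trivial point, the invertibility of $\Sigmab_{SS}$, is also handled correctly: a principal submatrix of a positive definite matrix is positive definite, and Assumption~1 gives $\Lambda_{\min}(\Sigmab) \geq C_{\min} > 0$. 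Nothing is missing.
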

\label{lem:var_regression}
The following lemma shows that $E_j$ is subgaussian. 
\begin{lemma}
\label{lem:Ej_subgauss}
  The random variable $E_j$ defined in \eqref{eq:mv_regresion} is
  subgaussian with parameter $K\sqrt{2\Sigma_{j|S}}$, where
  $\Sigma_{j|S} = \Sigma_{jj} - \Sigmab_{jS}(\Sigmab_{SS})^{-1}\Sigmab_{Sj}$.
\end{lemma}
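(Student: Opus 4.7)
The plan is to write $E_j$ as a single linear functional of the standard subgaussian vector $\Zb$ and then invoke the multivariate subgaussian property \eqref{eq:mv_subgaussian} together with the conversion between the $\Psi_2$-norm and the subgaussian parameter that is recorded in Section~\ref{sec:subgaussian}.

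First I would write $\Xb = \Sigmab^{1/2}\Zb$ explicitly and let $I_S \in \RR^{|S|\times p}$ denote the coordinate projection onto $S$, so that $X_j = \eb_j'\Sigmab^{1/2}\Zb$ and $\Xb_S = I_S \Sigmab^{1/2}\Zb$. Substituting into \eqref{eq:mv_regresion} gives
\[
E_j = \vb' \Zb, \qquad \vb := \Sigmab^{1/2}\eb_j - \Sigmab^{1/2} I_S' \Sigmab_{SS}^{-1}\Sigmab_{Sj}.
\]
A direct computation using $\EE \Zb\Zb' = \Ib_p$ yields
\[
\|\vb\|^2 = \Var(E_j) = \Sigma_{jj} - 2\Sigmab_{jS}\Sigmab_{SS}^{-1}\Sigmab_{Sj} + \Sigmab_{jS}\Sigmab_{SS}^{-1}\Sigmab_{SS}\Sigmab_{SS}^{-1}\Sigmab_{Sj} = \Sigma_{j|S},
\]
which is the standard Schur-complement identity and also confirms that $E_j$ is uncorrelated with $\Xb_S$ (as recorded in Lemma~\ref{lem:regression}).

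Next I would apply the subgaussian assumption on $\Zb$. Setting $\wb := \vb/\|\vb\|$, the hypothesis \eqref{eq:mv_subgaussian} gives $\orlnorm{\dotp{\Zb}{\wb}}{2} \leq K$. Since the $\Psi_2$-norm is positively homogeneous,
\[
\orlnorm{E_j}{2} = \|\vb\|\,\orlnorm{\dotp{\Zb}{\wb}}{2} \leq K\sqrt{\Sigma_{j|S}}.
\]
Finally, the excerpt explicitly notes that a random variable with $\Psi_2$-norm at most $\mu$ is subgaussian with parameter $\sqrt{2}\mu$; applying this with $\mu = K\sqrt{\Sigma_{j|S}}$ yields that $E_j$ is subgaussian with parameter $K\sqrt{2\Sigma_{j|S}}$, as claimed.

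The argument is essentially a one-line computation once the right representation is in hand; the only mild subtlety is the Schur-complement identification of $\|\vb\|^2$, which I would verify by expanding the quadratic form as above. No concentration or chaining arguments are needed, since the multivariate subgaussianity of $\Zb$ transfers directly to any linear functional.
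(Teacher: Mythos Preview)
Your proposal is correct and follows essentially the same route as the paper's proof: both write $E_j$ as a linear functional $\vb'\Zb$ of the isotropic subgaussian vector, identify $\|\vb\|^2 = \Sigma_{j|S}$ via the Schur complement, and then read off the $\Psi_2$-bound from the multivariate subgaussian assumption. Your write-up is in fact a bit more careful than the paper's, since you make explicit the final conversion $\orlnorm{E_j}{2}\leq K\sqrt{\Sigma_{j|S}} \Rightarrow$ subgaussian parameter $\sqrt{2}K\sqrt{\Sigma_{j|S}}$, which the paper leaves implicit.
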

\begin{proof}
  From the definition of $\Xb$ we have that $X_j = \Sigmab{j\cdot}^{1/2}\Zb$
  and $\Xb_S = \Sigmab_{S\cdot}^{1/2}\Zb$. With this, we have
  \[
  E_j = \Zb'(\Sigmab_{\cdot j}^{1/2} - \Sigmab_{\cdot S}(\Sigmab_{SS})^{-1}\Sigmab_{Sj})
  \]
  and
  \[
  \begin{aligned}
  \orlnorm{E_j}{2} 
  & \leq \norm{\Sigmab_{\cdot j}^{1/2} - \Sigmab_{\cdot S}^{1/2}(\Sigmab_{SS})^{-1}\Sigmab_{Sj}}_2
       \orlnorm{\Zb}{2} \\
  & \leq K \sqrt{\Sigma_{jj} - \Sigmab_{jS}(\Sigmab_{SS})^{-1}\Sigmab_{Sj}} \\
  & = K \sqrt{\Sigma_{j|S}}.
  \end{aligned}
  \]
  This concludes the proof.
\end{proof}

Next, we present a few results on spectral norms of random matrices
obtained as sums of random subgaussian vectors outer products.
\begin{lemma}[\citet{hsu2011dimension}]
  \label{lemma:hsu_2}
  Let $\zb_1, \ldots, \zb_n \in \RR^p$ be i.i.d random subgaussian
  vectors with parameter $\nu$, then for all $\delta \in (0,1)$,
  \begin{equation}
    \label{eq:lambda_max}
    \PP[\opnorm{n^{-1}\sum_{i \in [n]}\zb_i\zb_i' - \Ib}{2} > 
    2\epsilon(n,\delta)] \leq \delta
  \end{equation}
  where
  \[
  \epsilon(n,\delta) = \nu^2\left(
    \sqrt{\frac{8(p\log(9)+\log(2/\delta))}{n}} +
    \frac{p\log(9)+\log(2/\delta)}{n}
  \right).
  \]
\end{lemma}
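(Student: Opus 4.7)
The plan is to combine a standard $\epsilon$-net discretization of the unit sphere with Bernstein-type concentration for quadratic forms of subgaussian vectors. Since $n^{-1}\sum_{i \in [n]}\zb_i\zb_i' - \Ib$ is symmetric, its operator norm coincides with $\sup_{\wb \in \Scal^{p-1}}|\wb'(n^{-1}\sum_i \zb_i\zb_i' - \Ib)\wb| = \sup_{\wb \in \Scal^{p-1}}|n^{-1}\sum_i (\wb'\zb_i)^2 - 1|$, where I used $\EE(\wb'\zb_i)^2 = \wb'\Ib\wb = 1$. This reduces the problem to controlling a supremum of centered sums of squared subgaussian scalars.

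For a fixed $\wb \in \Scal^{p-1}$, condition \eqref{eq:mv_subgaussian} gives that $\wb'\zb_i$ is scalar subgaussian with $\orlnorm{\wb'\zb_i}{2} \leq \nu$, and consequently $(\wb'\zb_i)^2 - 1$ is a centered sub-exponential random variable whose parameters scale like $\nu^2$. A Bernstein-type inequality then yields, for each fixed direction $\wb$,
\[
\PP\Bigl[\,\Big|\,n^{-1}\sum_i ((\wb'\zb_i)^2 - 1)\,\Big| > \nu^2\bigl(\sqrt{8u/n} + u/n\bigr)\,\Bigr] \leq 2e^{-u}
\]
after inverting the standard two-regime sub-exponential tail bound; the crossover between the $\sqrt{u/n}$ and $u/n$ regimes is precisely what produces both summands of $\epsilon(n,\delta)$.

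To upgrade from a pointwise to a uniform bound over $\Scal^{p-1}$, I would take a $(1/4)$-net $\Ncal \subset \Scal^{p-1}$ whose cardinality satisfies $|\Ncal| \leq 9^p$ by the standard volumetric estimate $(1 + 2/\epsilon)^p$ at $\epsilon = 1/4$, and invoke the inequality $\opnorm{\Ab}{2} \leq 2\sup_{\wb \in \Ncal}|\wb'\Ab\wb|$ valid for any symmetric matrix $\Ab$. A union bound over $\Ncal$ replaces $u$ by $u + p\log 9$, so choosing $u = \log(2/\delta)$ yields the tail with $\epsilon(n,\delta) = \nu^2(\sqrt{8(p\log 9 + \log(2/\delta))/n} + (p\log 9 + \log(2/\delta))/n)$, while the net-comparison step contributes the leading factor $2$ appearing in \eqref{eq:lambda_max}.

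The main obstacle is the careful bookkeeping of the explicit constants, particularly the $8$ under the square root and the unit coefficient on the linear term: this requires a tight moment generating function bound for $(\wb'\zb_i)^2 - 1$ in terms of $\nu$, and one has to track the Gaussian regime (dominant for small deviations) and the exponential regime (dominant for large deviations) simultaneously so that the Bernstein bound produces both pieces with the correct constants. Everything else, namely the net cardinality $9^p$, the factor $2$ from the net comparison, and the variational characterization of the operator norm, is standard and contributes deterministic constants that can be read off directly.
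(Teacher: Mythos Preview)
Your proposal is correct and follows the standard route---variational formula for the operator norm, Bernstein-type tail for centered squares of subgaussian scalars, $1/4$-net of cardinality $9^p$, and the factor-$2$ comparison inequality---which is precisely the argument in the cited reference. Note that the paper itself does not supply a proof of this lemma: it is stated with attribution to \citet{hsu2011dimension} and used as a black box, so there is no in-paper proof to compare against beyond the external source, whose argument your sketch faithfully reproduces.
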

The above result can easily be extended to variables with arbitrary
covariance matrix.
\begin{lemma}
  \label{lemma:hsu_general_sigma}
  Let $\xb_1, \ldots, \xb_n \RR^p$ be independent random vectors
  satisfying $\xb_i = \Sigmab^{1/2}\zb_i$ with $\zb_1, \ldots, \zb_n$
  being independent subgaussian vectors with parameter $\nu$ and
  $\Sigmab^{1/2}$ is the symmetric matrix square root of $\Sigmab$. If
  $\Lambda_{\max}(\Sigmab) < \infty$ and $\Lambda_{\min}(\Sigmab) >
  0$, then for all $\delta \in (0,1)$
  \begin{equation}
    \label{eq:lambda_max_general}
    \PP[\opnorm{n^{-1}\sum_{i \in [n]}\xb_i\xb_i' - \Sigmab}{2} > 
    2\Lambda_{\max}(\Sigmab)\epsilon(n,\delta)] \leq \delta
  \end{equation}
  and 
  \begin{equation}
    \label{eq:lambda_min_general}
    \PP[\opnorm{(n^{-1}\sum_{i \in [n]}\xb_i\xb_i')^{-1} - (\Sigmab)^{-1}}{2} > 
    2\epsilon(n,\delta) / \Lambda_{\min}(\Sigmab)] \leq \delta.
  \end{equation}
\end{lemma}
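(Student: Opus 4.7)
\textbf{Proof proposal for Lemma \ref{lemma:hsu_general_sigma}.}
The plan is to reduce both assertions to Lemma \ref{lemma:hsu_2} by factoring out the covariance structure. Writing $\xb_i = \Sigmab^{1/2}\zb_i$ and setting $\hat\Sigmab = n^{-1}\sum_{i \in [n]} \xb_i\xb_i'$ and $\hat\Ib = n^{-1}\sum_{i \in [n]}\zb_i\zb_i'$, the key algebraic identity is
\[
\hat\Sigmab - \Sigmab \;=\; \Sigmab^{1/2}\bigl(\hat\Ib - \Ib\bigr)\Sigmab^{1/2}.
\]
Since $\Sigmab^{1/2}$ is symmetric, submultiplicativity of the operator norm gives $\opnorm{\hat\Sigmab - \Sigmab}{2} \le \opnorm{\Sigmab^{1/2}}{2}^2 \opnorm{\hat\Ib - \Ib}{2} = \Lambda_{\max}(\Sigmab)\,\opnorm{\hat\Ib - \Ib}{2}$. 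Because the $\zb_i$'s are i.i.d.\ subgaussian with parameter $\nu$, applying Lemma \ref{lemma:hsu_2} to the event $\{\opnorm{\hat\Ib - \Ib}{2} \le 2\epsilon(n,\delta)\}$ and multiplying through by $\Lambda_{\max}(\Sigmab)$ yields \eqref{eq:lambda_max_general}.

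For the inverse bound \eqref{eq:lambda_min_general}, observe that on the same high-probability event one has $\hat\Sigmab^{-1} = \Sigmab^{-1/2}\hat\Ib^{-1}\Sigmab^{-1/2}$, so
\[
\hat\Sigmab^{-1} - \Sigmab^{-1} \;=\; \Sigmab^{-1/2}\bigl(\hat\Ib^{-1} - \Ib\bigr)\Sigmab^{-1/2}.
\]
Taking operator norms gives a prefactor $\opnorm{\Sigmab^{-1/2}}{2}^2 = \Lambda_{\min}(\Sigmab)^{-1}$, and it remains to control $\opnorm{\hat\Ib^{-1} - \Ib}{2}$. I would use the resolvent identity $\hat\Ib^{-1} - \Ib = \hat\Ib^{-1}(\Ib - \hat\Ib)$, so that $\opnorm{\hat\Ib^{-1} - \Ib}{2} \le \opnorm{\hat\Ib^{-1}}{2}\,\opnorm{\hat\Ib - \Ib}{2}$. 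On the event from Lemma \ref{lemma:hsu_2} with $2\epsilon(n,\delta) < 1/2$ (which holds for $n$ sufficiently large under the sparsity and dimension assumptions in force), Weyl's inequality gives $\Lambda_{\min}(\hat\Ib) \ge 1 - 2\epsilon(n,\delta) \ge 1/2$, hence $\opnorm{\hat\Ib^{-1}}{2} \le 2$. This bounds $\opnorm{\hat\Ib^{-1} - \Ib}{2}$ by a constant multiple of $\epsilon(n,\delta)$, and combining with the $\Lambda_{\min}(\Sigmab)^{-1}$ prefactor yields \eqref{eq:lambda_min_general} (with an absorbed numerical constant).

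The main nuisance, rather than a deep obstacle, is the bookkeeping around the inverse bound: one must verify that the event on which $\hat\Ib$ is close to $\Ib$ is also the event on which $\hat\Ib^{-1}$ exists and is well-conditioned, so that the resolvent manipulation is legitimate. This requires nothing more than a small-$\epsilon$ regime (already implicit in the regime where the conclusion \eqref{eq:lambda_min_general} is informative), after which everything reduces to Lemma \ref{lemma:hsu_2}. No new probabilistic input is needed beyond the Hsu--Kakade--Zhang concentration bound.
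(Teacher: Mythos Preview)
Your proposal is correct and follows essentially the same route as the paper: factor out $\Sigmab^{1/2}$ (respectively $\Sigmab^{-1/2}$) and reduce to Lemma~\ref{lemma:hsu_2}. If anything, your treatment of the inverse bound via the resolvent identity and Weyl's inequality is more explicit than the paper's, which simply asserts that $\opnorm{(n^{-1}\Zb'\Zb)^{-1}-\Ib}{2}$ is controlled by \eqref{eq:lambda_max} without spelling out the intermediate step or the constant bookkeeping you flag.
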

\begin{proof}
  We have that 
  \[
  \opnorm{n^{-1}\Xb'\Xb - \Sigmab}{2} 
  = \opnorm{\Sigmab^{1/2}(n^{-1}\Zb'\Zb - \Ib_p)\Sigmab^{1/2}}{2}
  \leq \Lambda_{\max}(\Sigmab)\opnorm{n^{-1}\Zb'\Zb - \Ib_p}{2}
  \]
  and \eqref{eq:lambda_max_general} follows from
  \eqref{eq:lambda_max}.

  Similarly, we can write
  \[
  \begin{aligned}
  \opnorm{(n^{-1}\Xb'\Xb)^{-1} - \Sigmab^{-1}}{2} 
  &= \opnorm{\Sigmab^{-1/2}((n^{-1}\Zb'\Zb)^{-1} - \Ib_p)\Sigmab^{-1/2}}{2}\\
  &\leq \Lambda_{\min}^{-1}(\Sigmab)\opnorm{(n^{-1}\Zb'\Zb)^{-1} - \Ib_p}{2}
  \end{aligned}
  \]
  and \eqref{eq:lambda_min_general} follows from
  \eqref{eq:lambda_max}.
\end{proof}

\subsection{Proofs and Technical Results}

For convenience, we restate technical conditions used in the paper.

{\it Assumption 1.}  The matrix $\Xb = (\xb_1, \ldots, \xb_n)' \in
\RR^{n \times p}$ has independent rows that satisfy $\xb_i =
\Sigmab^{1/2} \zb_i$ where $\{\zb_i\}_i$ are \iid~subgaussian random
variables with $\EE \zb_i = \zero$, $\EE \zb_i\zb_i' = \Ib$ and
$\orlnorm{\zb_i}{2} \leq K$. Furthermore, there exist two constants
$C_{\min}, C_{\max} > 0$ such that
\[
  0 < C_{\min} \leq \Lambda_{\min}(\Sigmab) \leq
  \Lambda_{\max}(\Sigmab) \leq C_{\max} < \infty.
\]

{\it Assumption 2.} The errors $\epsilon_1, \ldots, \epsilon_n$ are
\iid~with $\epsilon_i \in B_{\Psi_2}(1)$.

{\it Assumption 3.} There are two constants $\bar\beta$ and
$\bar\theta$ such that $\norm{\betab} \leq \bar \beta < \infty$ and
$\norm{\thetab} \leq \bar\theta < \infty$.

{\it Assumption 4.} $|S| = C_S n^{\alpha_S}$ and $|T| = C_T
n^{\alpha_T}$ for some $\alpha_S \in (0,1)$ and $\alpha_T \in (0,1/3)$ and constants
$C_S, C_T > 0$.

{\it Assumption 5.}  Define 
\[
\Db_{SS} = n^{-1}\Xb_S'\diag(\exp(-\Xb\theta))\Xb_S.
\]
There exist constants $0 \leq D_{\min}, D_{\max} \leq \infty$ such that
\begin{equation*}
\begin{aligned}
  \lim_{n \rightarrow \infty}&\PP[\Lambda_{\max}(\Db_{SS}) \leq D_{\max}]=1,
  \qquad\text{ and } \\
  \lim_{n \rightarrow \infty}&\PP[\Lambda_{\min}(\Db_{SS}) \geq D_{\min}]=1.
\end{aligned}
\end{equation*}

\subsection{Proof of Theorem~\ref{thm:ols_mean}}

We will split the proof in two parts. In the first part, we show that
the vector $\hat\betab = (\hat\betab_S', \zero_{S^C}')'$, where
$\hat\betab_S = (\Xb_S'\Xb_S)^{-1}\Xb_S'\yb$, is a strict local
minimizer of \eqref{eq:stage1} and $S = \{j : \beta_j \neq 0\}$.  In
the second part, we use results for pseudo-maximum likelihood
estimates to establish asymptotic normality of $\hat \betab_S$.

From Theorem~1 in \cite{fan09nonconcave}, we need to show that $\hat
\betab$ satisfies
\begin{equation}
  \label{eq:proof:scad_mean:cond_1}
  \Xb_S'(\yb - \Xb\hat\betab) - n \sgn(\hat\betab_S)\odot
  \pen'_{\lambda}(\hat\betab_S) = \zero,
\end{equation}
\begin{equation}
  \label{eq:proof:scad_mean:cond_2}
  \norm{\Xb\uSc'(\yb - \Xb\hat\betab)}_\infty < n \pen'_\lambda(0+),
\end{equation}
and 
\begin{equation}
  \label{eq:proof:scad_mean:cond_3}
  \Lambda_{\min}(n^{-1}\Xb_S'\Xb_S) > \max_{j \in S}\ \{-\pen''_\lambda(|\hat\beta_j|)\},
\end{equation}
in order to show that $\hat\betab$ is a strict local minimizer.

Define the events
\[
\Acal_1 = \left\{ \max_{i \in [n]} \exp(|\xb_i'\thetab|) 
   \leq \exp\left(
      \sqrt{K^2\Lambda_{\max}(\Sigmab_{TT})\norm{\thetab}_2^2\log(2n/\delta)}
    \right)
   \right\}
\]
where $T = \{j : \theta_j \neq 0 \}$ and
\[
\Acal_2 = \{ \Lambda_{\max}((n^{-1}\Xb_S'\Xb_S)^{-1}) \leq 3 /
\Lambda_{\min}(\Sigmab_{SS}) \}.
\]
To simplify notation, we define
\begin{equation}
  \label{eq:bar_sigma}
  \bar\sigma^2 = \exp\left(
      \sqrt{K^2\Lambda_{\max}(\Sigmab_{TT})\norm{\thetab}_2^2\log(2n/\delta)}
    \right).
\end{equation}
The following two lemma shows that the events $\Acal_1$ and $\Acal_2$
occur with high probability.
\begin{lemma}
  \label{lem:event_1}
  Under the assumptions of Theorem~\ref{thm:ols_mean}, we have that
  $\PP[\Acal_1] \geq 1 - \delta$.
\end{lemma}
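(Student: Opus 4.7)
The plan is to bound $\max_{i\in[n]} |\xb_i'\thetab|$ by combining a subgaussian tail bound for a single linear functional with a union bound, and then exponentiate. Since $\xb_i = \Sigmab^{1/2}\zb_i$, we can write $\xb_i'\thetab = \zb_i'(\Sigmab^{1/2}\thetab)$, so that $\xb_i'\thetab$ is a linear functional of the subgaussian vector $\zb_i$. Setting $\wb = \Sigmab^{1/2}\thetab / \|\Sigmab^{1/2}\thetab\|_2$, the Orlicz-norm condition from Assumption~1, specifically $\sup_{\wb \in \Scal^{p-1}} \orlnorm{\dotp{\zb_i}{\wb}}{2} \le K$, gives $\orlnorm{\xb_i'\thetab}{2} \le K \|\Sigmab^{1/2}\thetab\|_2$. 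The tail bound \eqref{eq:tail_bound_sub_gauss} then yields
\[
\PP\bigl[|\xb_i'\thetab| \ge t\bigr] \;\le\; 2\exp\!\left(-\frac{t^2}{K^2 \|\Sigmab^{1/2}\thetab\|_2^{2}}\right).
\]

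Next, I would use $\supp(\thetab) = T$ to reduce the quadratic form to the submatrix $\Sigmab_{TT}$: since $\thetab$ is supported on $T$,
\[
\|\Sigmab^{1/2}\thetab\|_2^{2} \;=\; \thetab'\Sigmab\thetab \;=\; \thetab_T'\Sigmab_{TT}\thetab_T \;\le\; \Lambda_{\max}(\Sigmab_{TT})\,\|\thetab\|_2^{2}.
\]
A union bound over $i \in [n]$ then gives
\[
\PP\!\left[\max_{i\in[n]} |\xb_i'\thetab| \ge t\right] \;\le\; 2n\exp\!\left(-\frac{t^2}{K^2 \Lambda_{\max}(\Sigmab_{TT})\|\thetab\|_2^{2}}\right).
\]
Setting the right-hand side equal to $\delta$ and solving for $t$ produces exactly $t = \sqrt{K^2 \Lambda_{\max}(\Sigmab_{TT})\|\thetab\|_2^{2}\log(2n/\delta)}$. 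Because $\exp(\cdot)$ is monotone, the event $\{\max_i |\xb_i'\thetab| \le t\}$ coincides with $\Acal_1$, giving $\PP[\Acal_1] \ge 1 - \delta$.

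There is essentially no hard step here; the only small subtlety is that $B_{\Psi_2}(\gamma)$ in the appendix is defined for symmetric random variables with $L_2$-norm at least $1$, whereas $\xb_i'\thetab$ is only mean-zero. This is resolved either by the standard fact that a $\psi_2$ Orlicz-norm bound yields a two-sided subgaussian tail up to a universal constant, or by absorbing a factor of $2$ into the union-bound constant, neither of which changes the stated threshold once we allow the implicit constant inside the square root. Accordingly, the proof I would write amounts to the three displays above followed by the inversion of the exponential bound.
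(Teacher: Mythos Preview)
Your proposal is correct and follows essentially the same route as the paper: bound $\orlnorm{\xb_i'\thetab}{2}$ via $\xb_i'\thetab = \zb_i'(\Sigmab^{1/2}\thetab)$ and the Orlicz condition on $\zb_i$, reduce $\|\Sigmab^{1/2}\thetab\|_2^2$ to $\Lambda_{\max}(\Sigmab_{TT})\|\thetab\|_2^2$ using $\supp(\thetab)=T$, apply the subgaussian tail bound, and union-bound over $i\in[n]$. The only cosmetic difference is that the paper invokes \eqref{eq:tail_bound_nu_sub_gaussian} rather than \eqref{eq:tail_bound_sub_gauss}, which changes nothing up to the constant you already flagged.
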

\begin{proof}
We have 
\[
\orlnorm{\xb_i'\thetab}{2} 
\leq \norm{\Sigmab^{1/2}\thetab}_2 \orlnorm{\zb_i}{2}
\leq K\norm{\thetab}_2\Lambda_{\max}^{1/2}(\Sigmab_{TT}).
\]
Lemma follows by setting $t = \sqrt{K^2\Lambda_{\max}(\Sigmab_{TT})\norm{\thetab}_2^2\log(2n/\delta)}$
in \eqref{eq:tail_bound_nu_sub_gaussian} and using the union bound.
\end{proof}
\begin{lemma}
  \label{lem:event_2}
  Suppose that the assumptions of Theorem~\ref{thm:ols_mean} are
  satisfied. Furthermore, assume that $n$ is big enough so that
  \[
    K^2\left(
    \sqrt{\frac{8(C_\alpha n^{\alpha_S}\log(9)+\log(2/\delta))}{n}} +
    \frac{C_\alpha n^{\alpha_S}\log(9)+\log(2/\delta)}{n}
  \right) < 1.
  \]
  Then $\PP[\Acal_2] \geq 1 - \delta$.
\end{lemma}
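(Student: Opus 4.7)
The plan is to obtain $\Acal_2$ as a direct consequence of the concentration bound \eqref{eq:lambda_min_general} in Lemma~\ref{lemma:hsu_general_sigma} applied to the restricted design matrix $\Xb_S$. First I would note that $\Lambda_{\max}(\Sigmab_{SS}^{-1}) = 1/\Lambda_{\min}(\Sigmab_{SS})$ and, by the triangle inequality for operator norms,
\begin{equation*}
\Lambda_{\max}\bigl((n^{-1}\Xb_S'\Xb_S)^{-1}\bigr) \leq \frac{1}{\Lambda_{\min}(\Sigmab_{SS})} + \opnorm{(n^{-1}\Xb_S'\Xb_S)^{-1} - \Sigmab_{SS}^{-1}}{2}.
\end{equation*}
Thus it suffices to show that with probability at least $1-\delta$ the perturbation term is at most $2/\Lambda_{\min}(\Sigmab_{SS})$.

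To invoke Lemma~\ref{lemma:hsu_general_sigma}, I need to exhibit a decomposition $\xb_{i,S} = \Sigmab_{SS}^{1/2}\tilde\zb_i$ with $\tilde\zb_i$ isotropic subgaussian of parameter at most $K$. Setting $\tilde\zb_i = \Sigmab_{SS}^{-1/2}\xb_{i,S}$, we have $\EE\tilde\zb_i\tilde\zb_i' = \Ib$ immediately. For any $\wb \in \Scal^{|S|-1}$, write $\wb'\tilde\zb_i = \ub'\zb_i$ where $\ub = [\Sigmab^{1/2}]_{S,\cdot}'\Sigmab_{SS}^{-1/2}\wb \in \RR^p$. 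Since $\Sigmab^{1/2}$ is symmetric, $[\Sigmab^{1/2}]_{S,\cdot}[\Sigmab^{1/2}]_{S,\cdot}' = \Sigmab_{SS}$, so $\norm{\ub}_2^2 = \wb'\wb = 1$, and Assumption~1 yields $\orlnorm{\ub'\zb_i}{2} \leq K$. Hence $\tilde\zb_i$ satisfies \eqref{eq:mv_subgaussian} with parameter $K$.

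Applying \eqref{eq:lambda_min_general} with ambient dimension $|S| = C_S n^{\alpha_S}$ (Assumption~4) and subgaussian parameter $K$ produces
\begin{equation*}
\opnorm{(n^{-1}\Xb_S'\Xb_S)^{-1} - \Sigmab_{SS}^{-1}}{2} \leq \frac{2\epsilon(n,\delta)}{\Lambda_{\min}(\Sigmab_{SS})}
\end{equation*}
with probability at least $1-\delta$, where $\epsilon(n,\delta)$ is the quantity in Lemma~\ref{lemma:hsu_2} with $p = |S|$. Identifying $C_\alpha$ in the hypothesis with $C_S$ from Assumption~4, the standing quantitative condition of the lemma is precisely $\epsilon(n,\delta) < 1$. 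Combining this with the triangle-inequality display from the first paragraph gives $\Lambda_{\max}((n^{-1}\Xb_S'\Xb_S)^{-1}) \leq (1 + 2\epsilon(n,\delta))/\Lambda_{\min}(\Sigmab_{SS}) < 3/\Lambda_{\min}(\Sigmab_{SS})$ on the good event, establishing $\Acal_2$ with the desired probability. There is no real obstacle; the only step that requires care is the verification that the subvector $\xb_{i,S}$ inherits the subgaussian parameter $K$ from $\zb_i$, and that reduces to the identity $[\Sigmab^{1/2}]_{S,\cdot}[\Sigmab^{1/2}]_{S,\cdot}' = \Sigmab_{SS}$.
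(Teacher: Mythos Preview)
Your proposal is correct and follows essentially the same route as the paper: triangle inequality on the operator norm, then an appeal to \eqref{eq:lambda_min_general} together with the hypothesis $\epsilon(n,\delta)<1$. The only difference is that you explicitly verify the subvector $\xb_{i,S}$ inherits the subgaussian parameter $K$ via the identity $[\Sigmab^{1/2}]_{S,\cdot}[\Sigmab^{1/2}]_{S,\cdot}' = \Sigmab_{SS}$, a detail the paper leaves implicit.
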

\begin{proof}
  We have that
  \[
  \opnorm{(n^{-1}\Xb_S'\Xb_S)^{-1}}{2} 
  \leq \opnorm{\Sigmab_{SS}^{-1}}{2} 
       + \opnorm{(n^{-1}\Xb_S'\Xb_S)^{-1} - \Sigmab_{SS}^{-1}}{2}
  \leq 3 / \Lambda_{\min}(\Sigmab_{SS})
  \]
  with probability $1 - \delta$ using \eqref{eq:lambda_min_general}
  and the fact that $n$ is large enough so that $\epsilon(n,\delta) <
  1$.
\end{proof}

Recall that $\hat\betab_S$ is an ordinary least squares estimator
using variables in $S$, so that 
\[
\hat\betab_S - \betab_S = (\Xb_S'\Xb_S)^{-1}\Xb_S'\etab =
(\Xb_S'\Xb_S)^{-1}\Xb_S'\diag(e^{\Xb\thetab/2})\epsilonb.
\]
Define
\[
\Mb :=
(\Xb_S'\Xb_S)^{-1}\Xb_S'\diag(e^{\Xb\thetab})\Xb_S(\Xb_S'\Xb_S)^{-1}.
\]
Conditioned on $\Xb$, using Lemma~\ref{lem:sum_sub_gaussian} we have
that $\eb_j'(\hat\betab_S - \betab_S)$ is subgaussian with parameter
$\sqrt{2m_{jj}}$, $j \in S$. Therefore 
\begin{equation}
  \label{eq:proof:scad_mean:1}
  \PP[\norm{\hat\betab_S - \betab_S}_\infty > t\ \big|\ \Xb]
  \leq 2s\exp\left( -\frac{t^2}{2\max_{j \in S} m_{jj}} \right).
\end{equation}
On the event $\Acal_1 \cap \Acal_2$,
\begin{equation*}
  \max_{j \in S} m_{jj} 
   \leq n^{-1}\bar\sigma^2\Lambda_{\max}((n^{-1}\Xb_S'\Xb_S)^{-1}) 
   \leq \frac{3\bar\sigma^2}{\Lambda_{\min}(\Sigmab_{SS})n}.
\end{equation*}
Setting $t = \sqrt{2(\max_{j \in S} m_{jj}) \log(2s/\delta)}$
in~\eqref{eq:proof:scad_mean:1} and conditioning on the event $\Acal_1 \cap
\Acal_2$ and its complement, we have
\begin{equation}
  \label{eq:proof:scad_mean:inf_norm_beta}
  \norm{\hat\betab_S - \betab_S}_\infty 
  \leq \sqrt{\frac{6 \exp(\sqrt{C\log(2n/\delta)})\log(2s/\delta)}{\Lambda_{\min}(\Sigmab_{SS})n}}
\end{equation}
where $C = K^2\Lambda_{\max}(\Sigmab_{TT})\norm{\thetab}_2^2$ with
probability $1 - 3\delta$. Under the assumptions, we have that
$\norm{\hat\betab_S - \betab_S}_\infty \ll \lambda$.

Using the result obtained above, we have that
\begin{equation*}
\begin{aligned}
  \min_{j \in S} |\hat \beta_j| &
    \geq \min_{j \in S} |\beta_j| - \norm{\hat\betab_S -
      \betab_S}_\infty \\
    & \geq \beta_{\min} - \norm{\hat\betab_S - \betab_S}_\infty \\
    & \geq \beta_{\min}/2 \gg \lambda,
\end{aligned}
\end{equation*}
since $\beta_{\min} \geq n^{-\gamma} \log n$ with $\gamma \in (0,
1/2]$. This gives us that $\pen'(\hat \betab_S) = \zero$ and $\max_{j
  \in S} \{-\pen_\lambda''(|\hat \beta_j|)\} = 0$ showing
\eqref{eq:proof:scad_mean:cond_1} and
\eqref{eq:proof:scad_mean:cond_3}.

Using Lemma~\ref{lem:regression} and Lemma~\ref{lem:var_regression}, we
write $\Xb_j \in \RR^n$ as $\Xb_j = \Xb_S \taub_S +
\Eb_j$, $j \in S^C$, with $\Eb_j$ having elements that are subgaussian
with parameter $K\sqrt{\Sigma_{j|S}}$. Therefore 
\[
n^{-1}\Xb_j'(\yb - \Xb\hat\betab) 
= n^{-1}(\Xb_S \taub_S + \Eb_j)'(\Ib - \Pb_S)\yb 
= n^{-1}\Eb_j'(\Ib - \Pb_S)\diag(\exp(\Xb\thetab/2))\epsilonb.
\]
Denote 
\[
n_j = n^{-2} \norm{\Eb_j}_2^2\max_{i \in n} \exp(\xb_i'\thetab)
\]
and observe that $n_j \geq n^{-2}\norm{\Eb_j'(\Ib - \Pb_S)\diag(\exp(\Xb\thetab/2))}_2^2$.
Condition on $\Xb$, then for any $j \in S^C$,
\begin{equation}
  \label{eq:proof:scad_mean:2}
  \PP[|n^{-1}\Eb_j'(\Ib - \Pb_S)\diag(\exp(\Xb\thetab/2))\epsilonb| > t] \leq 2\exp(-t^2/n_j).
\end{equation}
Using the union bound together with Lemma~\ref{lem:norm_sub_gaussian},
\[
  \max_{j \in S^C} \norm{\Eb_j}_2^2 \leq 3K(\max_{j \in S}\Sigma_{j|S})n
\]
with probability at least $1 - (p-s)\exp(-2n)$. 
Conditioning on the event $\Acal_1$ and its complement
\[
\max_{j \in S^C} n_j \leq 
3K(\max_{j \in S}\Sigma_{j|S})n^{-1}\exp(\sqrt{C\log(2n/\delta)})
\]
where $C = K^2\Lambda_{\max}(\Sigmab_{TT})\norm{\thetab}_2^2$ with
probability $1-\delta-(p-s)\exp(-2n)$. Picking 
\[
t = \sqrt{(\max_{j \in S^C} n_j)\log(2(p-s)/\delta)}
\]
in \eqref{eq:proof:scad_mean:2} and combining with the above, we have
shown that 
\[
  \norm{n^{-1}\Xb\uSc'(\yb - \Xb\hat\betab)}_\infty < \sqrt{(\max_{j \in S^C} n_j)\log(2(p-s)/\delta)}
\]
with probability $1-2\delta-(p-s)\exp(-2n)$. Since
\[
  \sqrt{3K(\max_{j \in S}\Sigma_{j|S})n^{-1}\exp(\sqrt{C\log(2n/\delta)})\log(2(p-s)/\delta)}
  \leq \lambda / 2 < \pen'_\lambda(0+)
\]
we have shown that $\hat \betab$ is a strict local minimizer.
This finishes the proof of the first part.

We are now ready to show asymptotic normality of $\hat\betab$.  Define
$\Wb = \diag(\exp(-\Xb\thetab/2))$. From the proof of the first part
and the assumption (5), we have that
\begin{equation*}
\begin{aligned}
  \hat\betab - \betab 
    & = (\Xb_S'\Xb_S)^{-1}\Xb'\Wb\epsilonb \\
    & = n^{-1/2}(\Sigmab_{SS})^{-1}(\EE \Db_{SS})^{1/2} \epsilonb + o_p(1),
\end{aligned}
\end{equation*}
where the small order term is understood under the $L_2$ norm.
Write
\[
\ab'(\hat\betab - \betab) = \sum_{i \in [n]} c_i \epsilon_i
\]
where $c_i = n^{-1/2}\ab'(\Sigmab_{SS})^{-1}(\EE
\Db_{SS})^{1/2}_{\cdot i}$. It follows that 
\[
\sum_{i \in [n]} \Var(c_i\epsilon_i) = n^{-1}
\ab'(\Sigmab_{SS})^{-1}\EE \Db_{SS}(\Sigmab_{SS})^{-1}\ab,
\]
and
\begin{equation*}
\begin{aligned}
  \sum_{i\in[n]} \EE|c_i\epsilon_i|^3 &= 
n^{-3/2}\sum_{i\in[n]} 
   |\ab'(\Sigmab_{SS})^{-1}(\EE \Db_{SS})^{1/2}_{\cdot i}|^{3}
   \EE|\epsilon_i|^3 \\
& \leq Cn^{-3/2} \norm{\ab'(\Sigmab_{SS})^{-1}}^3
       \sum_{i\in[n]} \norm{(\EE \Db_{SS})^{1/2}_{\cdot i}}^{3} \\
& = o(1).
\end{aligned}
\end{equation*}
This allows us to apply the Lyapunov's theorem to conclude the proof
of the theorem.

\subsection{Proof of Theorem~\ref{thm:var_est}}
  Consider an oracle that performs the second stage of \acro with full
  knowledge of the sparsity set $T$, resulting in the estimator
  $\hat{\thetab}_T$, with $\lambda_T = 0$.  (viz.~ $\hat{\thetab}_T$
  is the pseudo-likelihood maximizer by forming the likelihood with
  the estimated residuals from the OLS.)  Then $\hat\thetab =
  (\hat\thetab'_T,\mathbf{0}'_{T^C})'$ is a strict local minimizer of
  the program \eqref{eq:stage2} for $\lambda_T \asymp n^{-1/2 +
    \alpha_T} \log(p)$.
We derive necessary and sufficient conditions for $\hat\beta$ to
be a strict local minimizer of the program \eqref{eq:stage1} akin
to those in Theorem~1 in \cite{fan09nonconcave}.  We show that the
PML is asymptotically equivalent to the maximum likelihood estimator
through a lemma from \cite{pollard93} and by following arguments
similar to \cite{jobsonFuller}.  As opposed to the classical
asymptotic theory, we implicitely construct finite sample results
because $|T|$ is allowed to grow with $n$.

\begin{lemma}
  \label{lem:Mbdd}
  Let $\Mb = \Xb (\Xb' \Xb)^{-1} \Xb'$ then,
\[
\max_{i \in [n]} \| \Mb_i \| = O_\PP \left( \frac{1}{n^{(3-\alpha_S)/4}} \log (n) \right)
\]
\end{lemma}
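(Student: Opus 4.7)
The plan is to reduce $\|\Mb_i\|$ to a leverage-type quantity using idempotency, then control it uniformly in $i$ via subgaussian concentration of quadratic forms.

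\textbf{Step 1: Idempotency.} Since $\Mb$ is a symmetric projection, $\Mb^2 = \Mb$ gives
\[
\|\Mb_i\|^2 = \sum_{j \in [n]} \Mb_{ij}^2 = \Mb_{ii} = \xb_i'(\Xb'\Xb)^{-1}\xb_i,
\]
so the claim reduces to a uniform upper bound on the leverages $\Mb_{ii}$, where $\xb_i$ is understood to range over the relevant $s$-dimensional subvectors in the oracle regression (so $\tr(\Mb) = s = C_S n^{\alpha_S}$).

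\textbf{Step 2: Decouple $\xb_i$ from $\Xb'\Xb$.} Write $\Xb'\Xb = \Xb_{-i}'\Xb_{-i} + \xb_i\xb_i'$, where $\Xb_{-i}$ is the design with the $i$-th row removed. The Sherman--Morrison identity yields
\[
\Mb_{ii} = \frac{Y_i}{1 + Y_i} \leq Y_i, \qquad Y_i := \xb_i'(\Xb_{-i}'\Xb_{-i})^{-1}\xb_i.
\]
Since $\Xb_{-i}$ is independent of $\xb_i$, I can condition on $\Xb_{-i}$ and treat $Y_i$ as a subgaussian quadratic form in $\xb_i$.

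\textbf{Step 3: Control the Gram inverse.} By Lemma~\ref{lemma:hsu_general_sigma} applied to the rows of $\Xb_{-i}$, the matrix $((n-1)^{-1}\Xb_{-i}'\Xb_{-i})^{-1}$ lies within operator-norm distance $\Ocal_\PP(\sqrt{s/n})$ of $\Sigmab^{-1}$ on an event of probability at least $1-\delta$. Hence $Y_i$ is well approximated by $(n-1)^{-1}\xb_i'\Sigmab^{-1}\xb_i = (n-1)^{-1}\|\zb_i\|^2$ under the parametrisation $\xb_i = \Sigmab^{1/2}\zb_i$ from Assumption~1.

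\textbf{Step 4: Subgaussian chaos + union bound.} Assumption~1 gives $\zb_i$ isotropic subgaussian with parameter $K$, so a Hanson--Wright-type tail inequality controls $\|\zb_i\|^2$ around its mean $s$ with tail $\exp(-c\min(t^2/s, t))$. Choosing $t \asymp \sqrt{s}\log n + \log n$ and taking a union bound over $i \in [n]$ yields the $\log n$ factor in the final bound and shows $\max_i \|\zb_i\|^2$ is of order $s + \log n$ with probability $1-o(1)$. Combining this with Step~3 produces the stated $O_\PP$ bound on $\max_i \Mb_{ii}$, and taking square roots delivers the rate for $\max_i \|\Mb_i\|$.

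The main obstacle is reconciling the precise exponent $(3-\alpha_S)/4$ appearing in the denominator with what the standard analysis naively gives: the leverage heuristic $\EE \Mb_{ii} = s/n$ produces only $\sqrt{s/n} = n^{(\alpha_S-1)/2}$, so arriving at $n^{-(3-\alpha_S)/4}$ requires exploiting the $1/(1+Y_i)$ factor from Sherman--Morrison together with the $\Ocal_\PP(\sqrt{s/n})$ concentration of the Gram inverse in a combined way, so that the two error sources multiply rather than add. Careful bookkeeping of the high-probability events from Lemmas~\ref{lem:norm_sub_gaussian} and \ref{lemma:hsu_general_sigma}, in conjunction with the subgaussian quadratic-form tail, should furnish the stated rate.
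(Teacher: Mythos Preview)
Your Step~1 is correct and in fact decisive: the identity $\|\Mb_i\|^2=\Mb_{ii}$ together with $\sum_{i\in[n]}\Mb_{ii}=\mathrm{tr}(\Mb)=s$ forces the \emph{deterministic} lower bound
\[
\max_{i\in[n]}\|\Mb_i\|\ \ge\ \sqrt{s/n}\ =\ C_S^{1/2}\,n^{(\alpha_S-1)/2}.
\]
Since $(\alpha_S-1)/2>(\alpha_S-3)/4$ for every $\alpha_S\in(0,1)$, this already exceeds the rate $n^{-(3-\alpha_S)/4}\log n$ stated in the lemma. Consequently the ``careful bookkeeping'' you propose in the final paragraph cannot succeed: the Sherman--Morrison factor $Y_i/(1+Y_i)$ buys at most a constant once $Y_i\le 1$, and the $O_\PP(\sqrt{s/n})$ fluctuation of the Gram inverse is an \emph{additive} perturbation of $\Sigmab_{SS}^{-1}$, so it cannot shrink the leading term $(n-1)^{-1}\|\zb_i\|^2\asymp s/n$. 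Your Steps~1--4 are sound and deliver the rate $\max_i\|\Mb_i\|=O_\PP\bigl(\sqrt{s/n}\bigr)$ up to logarithmic factors; no argument can do better.

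For comparison, the paper does \emph{not} invoke idempotency. It bounds $n\|\Mb_i\|=\|\xb_{i,S}'(n^{-1}\Xb_S'\Xb_S)^{-1}\Xb_S'\|$ directly, replaces $(n^{-1}\Xb_S'\Xb_S)^{-1}$ by $\Sigmab_{SS}^{-1}$ plus a remainder, sets $U_j=\Sigmab_{SS}^{-1/2}\xb_{j,S}$, and then asserts
\[
\|\xb_{i,S}'\Sigmab_{SS}^{-1}\Xb_S'\|^2 \ \le\ \|U_i\|^2+\Big|\sum_{j\ne i}U_i'U_j\Big|.
\]
This step is in error: the left-hand side equals $\sum_{j\in[n]}(U_i'U_j)^2=\|U_i\|^4+\sum_{j\ne i}(U_i'U_j)^2$, and conditionally on $U_i$ the off-diagonal sum has mean $(n-1)\|U_i\|^2\asymp ns$, not $O(\sqrt{ns})$. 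Correcting it reproduces exactly the $\sqrt{s/n}$ rate your leverage argument gives, consistent with the trace lower bound above. So the discrepancy you flagged is real: it is a flaw in the lemma's stated exponent and in the paper's derivation, not a shortcoming of your Steps~1--4. The honest conclusion from your route is $\max_i\|\Mb_i\|=O_\PP\bigl(n^{(\alpha_S-1)/2}\log n\bigr)$, and you should abandon the attempt to reach $n^{-(3-\alpha_S)/4}$.
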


\begin{proof}
It is known by \cite{hsu2011dimension} that with probability at least $1 - \delta$,
\[
\opnorm{ \frac{1}{n}\sum_i^{[n]} \xb_{i,S}^\top \xb_{i,S}  - \Sigma_{S,S} }{}= O_\PP \left(  \opnorm{\Sigma_{S,S}}{}\sqrt{ \frac{s + \log(1/\delta)}{n}} \right)
\]
Then we have that 
\[
n \| \Mb_i \| = \| \xb_{i,S} (\frac{\Xb_S^\top \Xb_S}{n})^{-1} \Xb_S^\top \| \le  \| \xb_{i,S} \Sigma_{S,S}^{-1} \Xb_S^\top \| + \| \xb_{i,S} ((\frac{\Xb_S^\top \Xb_S}{n})^{-1} -  \Sigma_{S,S}^{-1}) \Xb_S^\top  \|
\]
Controlling first the rightmost term,
\[
\| \xb_{i,S} ((\frac{\Xb_S^\top \Xb_S}{n})^{-1} -  \Sigma_{S,S}^{-1}) \Xb_S^\top  \| \le \frac{\| \xb_{i,S}\| \opnorm{\Xb_S}{} \opnorm{\frac{\Xb_S^\top \Xb_S}{n} -  \Sigma_{S,S}}{} }{\Lambda_{\min}(\frac{\Xb_S^\top \Xb_S}{n}) \Lambda_{\min} (\Sigma_{S,S})}
\]
by a result we obtain below, $\Lambda_{\min}(\frac{\Xb_S^\top \Xb_S}{n})$ and $\Lambda_{\min} (\Sigma_{S,S})$ are bounded below by a constant with high probability.
By the result above, 
\[
\opnorm{\frac{\Xb_S^\top \Xb_S}{n} -  \Sigma_{S,S}}{}  = O_\PP(\sqrt{ \frac{s \log (1/\delta) }{n}})
\]
furthermore we have that $||\Xb_S||  = O_\PP(\sqrt{s \log(1/\delta)})$.
Hence the second term is $O_\PP ((s \log (1 / \delta))^{3/2}/\sqrt{n})$.

Now consider the first term, $\| \xb_{i,S} \Sigma_{S,S}^{-1} \Xb_S^\top \|$.
And write $U_j = \Sigma_{S,S}^{1/2} \Xb_{j,S}$ then we have that
\[
\| \xb_{i,S} \Sigma_{S,S}^{-1} \Xb_S^\top \|^2 \le \|U_i\|^2 + |\sum_{j \ne i}^{[n]} U_i^\top U_j| \le \|U_i\|^2 + \|U_i\| \sqrt{2 n \log (2/\delta)}
\]
because
\[
\sum_{j \ne i}^{[n]} U_i^\top U_j \le \|U_i\| \sqrt{2 n \log (2/\delta)}
\]
with probability $1 - \delta$ by the sub-Gaussianity of $\{U_j\}$.
Thus, $\| \xb_{i,S} \Sigma_{S,S}^{-1} \Xb_S^\top \| = O ( \sqrt{s \log (1/\delta) + \sqrt{2 sn} \log (2/\delta)} )$.
So, assuming that $s << n$ then 
\[
\| \Mb_i \| = O_\PP(\frac{s^{1/4} \sqrt{\log (1/\delta)}}{n^{3/4}})
\]
\end{proof}

\begin{lemma}
  \label{lem:residBdd}
  Consider both the empirical residuals, $\hat\etab$, and the true residuals, $\etab$.
Then
\[
\max_{i \in [n]} |\hat{\eta}_i^2 - \eta_i^2| = o_\PP \left(  \frac{1 }{n^{1/2 + \gamma}} \right)
\]
for any $\gamma \in [0,(1 - \alpha_S)/2)$.
\end{lemma}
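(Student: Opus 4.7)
The plan is to decompose the squared residual difference as $\hat{\eta}_i^2 - \eta_i^2 = (\hat{\eta}_i - \eta_i)^2 + 2\eta_i(\hat{\eta}_i - \eta_i)$ and to control each factor uniformly in $i$; the dominant contribution to the maximum will come from the linear cross term, while the quadratic piece is of smaller order.

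The first step is to express $\hat{\eta}_i - \eta_i$ in terms of the unobserved noise $\etab$. On the event established in Theorem~\ref{thm:ols_mean} that $\supp(\hat{\betab}) = S$ and $\hat{\betab}_S = (\Xb_S'\Xb_S)^{-1}\Xb_S'\yb$, we have
\[
\hat{\eta}_i - \eta_i = -\xb_{i,S}'(\hat{\betab}_S - \betab_S) = -\xb_{i,S}'(\Xb_S'\Xb_S)^{-1}\Xb_S'\etab = -\Mb_i'\etab,
\]
where $\Mb_i$ is the $i$-th row of the projection $\Mb = \Xb_S(\Xb_S'\Xb_S)^{-1}\Xb_S'$. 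The second step is to apply subgaussian concentration to $\Mb_i'\etab = \sum_j M_{ij}\eta_j$: conditionally on $\Xb$, using that $\eta_j \in B_{\Psi_2}(\sigma_j)$ with $\sigma_j \leq \bar{\sigma}$ on the event $\Acal_1$ from the proof of Theorem~\ref{thm:ols_mean}, Lemma~\ref{lem:sum_sub_gaussian} gives that $\Mb_i'\etab$ is subgaussian with parameter $\sqrt{2}\,\bar{\sigma}\,\|\Mb_i\|$. A union bound over $i \in [n]$ then yields, for any $\delta > 0$,
\[
\max_{i \in [n]} |\hat{\eta}_i - \eta_i| \;\leq\; C\,\bar{\sigma}\,\sqrt{\log(n/\delta)}\,\max_{i \in [n]} \|\Mb_i\|
\]
with probability at least $1 - \delta$, after which Lemma~\ref{lem:Mbdd} controls $\max_i \|\Mb_i\|$.

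The third step is to bound $\max_i |\eta_i|$: since $\eta_i \in B_{\Psi_2}(\sigma_i)$ with $\sigma_i \leq \bar{\sigma}$ on $\Acal_1$, the tail bound \eqref{eq:tail_bound_sub_gauss} together with a union bound gives $\max_i |\eta_i| = O_\PP(\bar{\sigma}\sqrt{\log n})$. Combining these yields
\[
\max_{i \in [n]} |\hat{\eta}_i^2 - \eta_i^2| \;\leq\; 2\,\max_i|\eta_i|\,\max_i|\hat{\eta}_i - \eta_i| + \bigl(\max_i|\hat{\eta}_i - \eta_i|\bigr)^2,
\]
whose right-hand side is $O_\PP(\bar{\sigma}^2 \log n \cdot \max_i\|\Mb_i\|)$ up to a quadratic correction of lower order. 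Since $\bar{\sigma}$ grows only subpolynomially in $n$ (it involves $\exp(\sqrt{\log n})$), the polylogarithmic and $\bar{\sigma}$ factors can be absorbed into an arbitrarily small polynomial overhead, giving the advertised $o_\PP(n^{-1/2 - \gamma})$ rate for $\gamma$ in the stated range.

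The main obstacle is matching the exponent supplied by Lemma~\ref{lem:Mbdd} to the exponent $(1-\alpha_S)/2$ claimed here. Direct substitution of the rate $\max_i \|\Mb_i\| = O_\PP(n^{-(3-\alpha_S)/4}\log n)$ only delivers $\max_i|\hat{\eta}_i^2 - \eta_i^2| = \tilde{O}_\PP(n^{-(3-\alpha_S)/4})$, which already supports a nontrivial range of $\gamma$ but falls short of the full interval $[0,(1-\alpha_S)/2)$. To close this gap I would revisit the cross term by exploiting the independence between $\eta_i$ and $\{\eta_j\}_{j \neq i}$: writing
\[
\eta_i(\hat{\eta}_i - \eta_i) = -M_{ii}\eta_i^2 - \eta_i \sum_{j \neq i} M_{ij}\eta_j,
\]
the diagonal term is governed by $\max_i M_{ii}$ paired with $\max_i \eta_i^2$, while the off-diagonal piece is a centered bilinear form amenable to a Hanson--Wright-type decoupling inequality. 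This route, combined with the trace identity $\sum_i M_{ii} = s$, should sharpen the max-row bound from the $L^2$-type estimate used in Lemma~\ref{lem:Mbdd} and yield the full claimed rate.
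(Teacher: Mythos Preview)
Your approach is essentially the same as the paper's: the paper expands
\[
\hat{\eta}_i^2 - \eta_i^2 = (\Mb\etab)_i^2 - 2\eta_i(\Mb\etab)_i,
\]
bounds $|(\Mb\etab)_i|$ by $\|\Mb_i\|\,\bar\sigma\sqrt{2\log(2/\delta)}$ via subgaussian concentration, invokes Lemma~\ref{lem:Mbdd} for $\max_i\|\Mb_i\|$, and then argues that $\bar\sigma$ and the logarithmic factors are subpolynomial. This matches your first three steps line for line; your decomposition, your use of the projection $\Mb$, and your concentration argument are identical in substance.

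On the exponent gap you flag: the paper's own proof ends with exactly the rate you derive,
\[
\max_{i\in[n]}|\hat\eta_i^2-\eta_i^2| = O\!\left(\frac{(\log(n/\delta))^2}{n^{(3-\alpha_S)/4}}\exp\bigl(\sqrt{C\|\thetab\|^2\log(2n/\delta)}\bigr)\right),
\]
and then simply asserts that this is $o_\PP(n^{-1/2-\gamma})$ for $\gamma$ arbitrarily close to $(1-\alpha_S)/2$. Since $(3-\alpha_S)/4 = 1/2 + (1-\alpha_S)/4$, the displayed bound only supports $\gamma<(1-\alpha_S)/4$, not $(1-\alpha_S)/2$. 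The paper does not supply the diagonal/off-diagonal split or any Hanson--Wright refinement you suggest; the discrepancy between the derived exponent and the stated range is present in the paper itself. Your additional proposed step is thus not something you missed relative to the paper --- it is extra work the paper does not do. For the purpose of matching the paper's argument, your proof up through the direct substitution of Lemma~\ref{lem:Mbdd} is complete.
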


\begin{proof}

First let us expand the terms in question:
\[
\hat{\eta}_i^2 - \eta_i^2 = [(I - \Mb)\eta]_i^2 -\eta_i^2 = [\eta - \Mb \eta]_i^2 - \eta_i^2= (\Mb \eta)_i^2 - 2 \eta_i (\Mb \eta)_i^2 
\]
Now we take a closer look at the right hand side,
\[
(\Mb \eta)_i^2 - 2 \eta_i (\Mb \eta)_i^2 = (\sum_i^{[n]} M_{i,j} \eta_j )^2 - 2 \eta_i \sum_j^{[n]} M_{i,j} \eta_j
\]
Notice that the true residuals $\eta$ are IID sub-Gaussian with parameter at most $\bar{\sigma}$.
Hence, with probability $1 - \delta$
\[
|\sum_i^{[n]} M_{i,j} \eta_j| \le \| \Mb_i \| \bar{\sigma} \sqrt{ 2 \log(2/\delta)}
\]
Hence, we find that 
\[
|(\Mb \eta)_i^2 - 2 \eta_i (\Mb \eta)_i^2|  = O( \| \Mb_i \| \bar{\sigma}^2 \log{1/\delta} )
\]
Below we show that there is a constant $C > 0$ such that 
\[
\bar{\sigma}^2 = O(\exp (\sqrt{ C \| \theta \|^2 \log(2n/\delta) }))
\]
with probability at least $1-\delta$.
Hence,
\[
\max_{i \in [n]} |\hat{\eta}_i^2 - \eta_i^2| = O(\frac{s^{1/4} (\log (n/\delta))^2 }{n^{3/4}}  \exp (\sqrt{ C \| \theta \|^2 \log(2n/\delta) })
\]
with probability $1 - \delta$.  Because $s = n^\alpha$ we have our result then this reduces to,
\[
\max_{i \in [n]} |\hat{\eta}_i^2 - \eta_i^2| = O(\frac{ (\log (n/\delta))^2 }{n^{(3- \alpha)/4}}  \exp (\sqrt{ C \| \theta \|^2 \log(2n/\delta) }) )
\]
Because both $\log (n/\delta))^2$ and $\exp (\sqrt{ C \| \theta \|^2 \log(2n/\delta) }) $ are subpolynomial in $n$, so, 
\[
\max_{i \in [n]} |\hat{\eta}_i^2 - \eta_i^2| = O_\PP (\frac{1}{n^{1/2 + \gamma}})
\]
where $\gamma > 0$ may be arbitrarily close to but less than $(1 - \alpha_S)/2$.
\end{proof}

\begin{lemma}
\label{lem:Uconv}
  Consider the difference of the pseudo-likelihood gradient and the true likelihood gradient,
\[
\hat{\Ub}_n - \Ub_n = \sum_i^{[n]} \frac{\hat{\eta}^2_i - \eta^2_i}{\sqrt{n}} e^{-x_i \theta} \xb_i
\]
If $\alpha_T < (1 - \alpha_S) / 2$ then
\[
\| \hat{\Ub}_n - \Ub_n \| = o_\PP (1)
\]
\end{lemma}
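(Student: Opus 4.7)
The plan is to bound $\|\hat\Ub_n - \Ub_n\|$ by factoring it into the uniform residual gap controlled in Lemma~\ref{lem:residBdd} times a first-moment bound on a weighted sum of the (oracle-restricted) design vectors. Applying the triangle inequality in $\ell_2$ gives
\[
  \|\hat\Ub_n - \Ub_n\| \le \frac{\max_{i\in[n]} |\hat\eta_i^2 - \eta_i^2|}{\sqrt{n}} \sum_{i\in[n]} e^{-\xb_i'\thetab} \|\xb_{i,T}\|,
\]
which isolates the pseudo-versus-true error in the first factor and leaves a standard weighted sum of $n$ i.i.d.\ non-negative random variables in the second.

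For the first factor, Lemma~\ref{lem:residBdd} yields $\max_i |\hat\eta_i^2 - \eta_i^2| = o_\PP(n^{-1/2-\gamma})$ for a suitable range of $\gamma$ whose upper endpoint is determined by $\alpha_S$. For the second factor I would combine Assumptions~1 and~3: since $\xb_i = \Sigmab^{1/2}\zb_i$ with $\zb_i$ subgaussian of parameter $K$ and $\|\thetab\|\le \bar\theta$, the scalar $\xb_i'\thetab$ is subgaussian with parameter of order $K\bar\theta\sqrt{C_{\max}}$, so $\EE e^{-2\xb_i'\thetab} = O(1)$. Coupling this with $\EE\|\xb_{i,T}\|^2 \le C_{\max}|T|$ via Cauchy--Schwarz inside the expectation, each summand has mean $O(\sqrt{|T|}) = O(n^{\alpha_T/2})$. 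A Markov bound then yields $\sum_i e^{-\xb_i'\thetab}\|\xb_{i,T}\| = O_\PP(n^{1+\alpha_T/2})$.

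Multiplying the two estimates gives
\[
  \|\hat\Ub_n - \Ub_n\| = o_\PP(n^{-1/2-\gamma}) \cdot O_\PP(n^{1/2+\alpha_T/2}) = o_\PP(n^{\alpha_T/2 - \gamma}),
\]
and the hypothesis $\alpha_T < (1-\alpha_S)/2$ allows us to pick $\gamma$ large enough that $\alpha_T/2 < \gamma$, which delivers $o_\PP(1)$.

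The main obstacle I anticipate is aligning the exponents so that the hypothesis $\alpha_T < (1-\alpha_S)/2$ is exactly what forces $\alpha_T/2 < \gamma$ for some admissible $\gamma$; this requires being careful about what the best attainable exponent in Lemma~\ref{lem:residBdd} really is once the subpolynomial factors (powers of $\log n$ and $\exp(\sqrt{C\log n})$) are accounted for. Beyond that, the argument is a clean combination of Lemma~\ref{lem:residBdd} with the standard subgaussian moment computations supplied by Assumptions~1 and~3, with no further concentration machinery needed despite the growing support size $|T|$.
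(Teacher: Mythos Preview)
Your proposal is correct and follows essentially the same route as the paper: pull out the uniform bound $\max_i|\hat\eta_i^2-\eta_i^2|=o_\PP(n^{-1/2-\gamma})$ from Lemma~\ref{lem:residBdd}, control the remaining design-dependent factor by $O_\PP(n^{\alpha_T/2})$ per term, and balance exponents via $\alpha_T/2<\gamma<(1-\alpha_S)/2$. The only tactical difference is that the paper bounds $\max_i e^{-\xb_i'\thetab}\|\xb_{i,T}\|$ uniformly (picking up a subpolynomial factor) and then averages, whereas you bound $\sum_i e^{-\xb_i'\thetab}\|\xb_{i,T}\|$ directly by a first-moment/Markov argument; both yield the same exponent count.
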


\begin{proof}

By the above lemma, 
\[
\max_{i \in [n]} \left| \sqrt{n}(\hat{\eta}^2_i - \eta^2_i) \right| = o_\PP (\frac{1}{n^{\gamma}})
\]
Moreover, we know that $\max_{i \in [n]} e^{-x_i \theta} X_i = O(\sqrt{t} \phi(n))$ where $\phi(n)$ is sub-polynomial in $n$.
Because $\sqrt{t} = n^{\alpha_T / 2}$ and $\alpha_T < 2 \gamma$ then
\[
\max_{i \in [n]} \left|  \sqrt{n}(\hat{\eta}^2_i - \eta^2_i) e^{-\xb_i \theta} X_i \right| = o_\PP(1)
\]
Thus the average of the summands is $o_\PP(1)$ and we have our result.
\end{proof}

\begin{lemma}
\label{lem:Vconv}
  Consider the difference of the pseudo-likelihood Hessian and the true Hessian,
\[
\hat{\Vb}_n - \Vb_n  = \frac{1}{n} \sum_{i}^{[n]} (\hat{\eta}^2_i - \eta^2_i) e^{-\xb_i' \thetab_T} \xb_i \xb_i'
\]
If $\alpha_T < (3 - \alpha_S) / 4$ then
\[
\opnorm{\hat{\Vb}_n -\Vb_n }{} = o_\PP (1)
\]
\end{lemma}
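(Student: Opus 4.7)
The plan is to mimic the proof of Lemma \ref{lem:Uconv}, replacing the vector $\xb_{i,T}$ by the rank-one matrix $\xb_{i,T}\xb_{i,T}'$. Since $\opnorm{\xb_{i,T}\xb_{i,T}'}{} = \|\xb_{i,T}\|^2$, the triangle inequality for the operator norm gives
\[
\opnorm{\hat\Vb_n - \Vb_n}{} \;\leq\; \frac{1}{n}\sum_{i\in[n]} \bigl|\hat\eta_i^2 - \eta_i^2\bigr|\, e^{-\xb_i'\thetab_T}\, \|\xb_{i,T}\|^2 \;\leq\; \max_i\bigl|\hat\eta_i^2-\eta_i^2\bigr|\,\cdot\,\max_i e^{-\xb_i'\thetab_T}\,\cdot\,\max_i\|\xb_{i,T}\|^2,
\]
so the proof reduces to inserting the sharpest available tail bound for each of the three factors and checking that the resulting polynomial exponent of $n$ is strictly negative under the hypothesis.

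Each factor is controlled by a tool already established earlier in the appendix. For the residual factor, Lemma \ref{lem:residBdd} (read from the displayed rate $s^{1/4}/n^{3/4}$ in its proof) yields $\max_i|\hat\eta_i^2 - \eta_i^2| = O_\PP(n^{-(3-\alpha_S)/4})$ up to a subpolynomial factor. On the event $\Acal_1$ introduced in the proof of Theorem \ref{thm:ols_mean}, $\max_i e^{-\xb_i'\thetab_T} \leq \bar\sigma^2$, which is itself subpolynomial in $n$. For the third factor, I would write $\xb_{i,T} = \Sigmab_{TT}^{1/2}\zb_{i,T}$ and apply Lemma \ref{lem:norm_sub_gaussian} to each $\zb_{i,T}$ together with a union bound over $i\in[n]$; using $\Lambda_{\max}(\Sigmab_{TT})\leq C_{\max}$, this yields $\max_i\|\xb_{i,T}\|^2 = O_\PP(C_{\max}\,t\log n) = O_\PP(n^{\alpha_T}\log n)$.

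Multiplying the three estimates leaves a dominant polynomial factor of $n^{\alpha_T - (3-\alpha_S)/4}$ with every remaining contribution subpolynomial in $n$. The hypothesis $\alpha_T < (3-\alpha_S)/4$ makes this exponent strictly negative, and the strictly positive margin suffices to absorb the unbounded subpolynomial corrections, giving $\opnorm{\hat\Vb_n - \Vb_n}{} = o_\PP(1)$. The main obstacle in carrying out the plan is precisely this bookkeeping with unbounded subpolynomial factors such as $\bar\sigma^2$ and $\log n$: the argument must exploit the strict inequality in the hypothesis rather than merely a weak one, leaving a positive polynomial slack to swallow them. The mechanism mirrors that of Lemma \ref{lem:Uconv}; the hypothesis here is more permissive because the $1/n$ scaling in $\hat\Vb_n - \Vb_n$ supplies extra averaging relative to the $1/\sqrt{n}$ scaling used for the gradient.
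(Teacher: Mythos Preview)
Your proposal is correct and is precisely the ``mutatis mutandis'' argument the paper alludes to: you replace the vector $\xb_{i,T}$ in Lemma~\ref{lem:Uconv} by the rank-one matrix $\xb_{i,T}\xb_{i,T}'$, pick up an extra factor $\|\xb_{i,T}\|$ (so $n^{\alpha_T}$ instead of $n^{\alpha_T/2}$), and exploit the additional $n^{-1/2}$ from the Hessian scaling to recover the threshold $\alpha_T<(3-\alpha_S)/4$. The bookkeeping with the subpolynomial factors $\bar\sigma^2$ and $\log n$ via the strict inequality is exactly the mechanism used in Lemma~\ref{lem:Uconv}.
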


\begin{proof}
This proof follows in the same way as Lemma \ref{lem:Uconv} mutatis mutandis.
\end{proof}

\begin{lemma}
\label{lem:pmleExpan}
Let $\hat\thetab_T$ be the pseudo likelihood maximizer and $|T| = n^{\alpha_T}$ with $\alpha_T \in (0,\frac{1}{3})$.  And define the gradient and hessian,
\[
\hat{\Ub}_n = \frac{1}{\sqrt{n}} \sum_{i=1}^n \xb_{i,T} - \hat{\eta}_i^2 e^{-\xb_{i,T}' \thetab_T} \xb_{i,T}
\]
\[
\hat{\Vb}_n = \frac{1}{n} \sum_{i=1}^n \hat{\eta}_i^2 e^{-\xb_{i,T}' \thetab_T} \xb_{i,T} \xb_{i,T}'
\]
then we have that,
\[
\norm{\sqrt{n} (\hat\thetab_T - \thetab_T) - \hat{\Vb}_n^{-1} \hat{\Ub}_n } = o_\PP(1)
\]
\end{lemma}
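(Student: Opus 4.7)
The plan is to carry out the standard M\nobreakdash-estimator Taylor expansion, but in the regime where the number of active variance parameters $|T|=n^{\alpha_T}$ is allowed to grow. I will exploit convexity of the PML objective, reparameterize around $\thetab_T$, and then invoke the convexity lemma of \citet{pollard93} to transfer approximate quadratic behavior of the centered objective into closeness of its minimizer to the quadratic's minimizer $\hat{\Vb}_n^{-1}\hat{\Ub}_n$.

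Concretely, let
\[
L_n(\thetab)=\sum_{i\in[n]}\bigl(\xb_{i,T}'\thetab+\hat{\eta}_i^{\,2}\exp(-\xb_{i,T}'\thetab)\bigr),
\]
which is convex in $\thetab$ and whose unpenalized minimizer is exactly the oracle PML $\hat\thetab_T$. Introduce the local parameter $u=\sqrt{n}\,(\thetab-\thetab_T)\in\RR^{t}$ and the centered process
\[
Q_n(u)=L_n\!\left(\thetab_T+u/\sqrt{n}\right)-L_n(\thetab_T).
\]
A second-order Taylor expansion with integral remainder gives
\[
Q_n(u)=u'\hat{\Ub}_n+\tfrac{1}{2}u'\hat{\Vb}_n u+R_n(u),
\]
where $\hat{\Ub}_n$ and $\hat{\Vb}_n$ are exactly the gradient/Hessian in the statement, and
\[
R_n(u)=\tfrac{1}{2}\,u'\bigl[\hat{\Vb}_n(\tilde\thetab)-\hat{\Vb}_n\bigr]u
\]
for some $\tilde\thetab$ on the segment between $\thetab_T$ and $\thetab_T+u/\sqrt n$, with $\hat{\Vb}_n(\cdot)$ the obvious plug-in Hessian. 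The goal is to show $R_n(u)=o_\PP(1)$ uniformly on compact sets in $u$.

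The remainder $R_n(u)$ is controlled by bounding, for $\thetab$ close to $\thetab_T$,
\[
\opnorm{\hat{\Vb}_n(\thetab)-\hat{\Vb}_n}{2}
\ \le\ \tfrac{1}{n}\sum_{i\in[n]}\hat{\eta}_i^{\,2}\,e^{-\xb_{i,T}'\thetab_T}\,|e^{-\xb_{i,T}'(\thetab-\thetab_T)}-1|\;\|\xb_{i,T}\|^{2}.
\]
For $\|u\|\le C$ one has $|\xb_{i,T}'(\thetab-\thetab_T)|\le \|\xb_{i,T}\|\,C/\sqrt n$, and a subgaussian maximal inequality gives $\max_i\|\xb_{i,T}\|=O_\PP(\sqrt{t}\,\log n)=O_\PP(n^{\alpha_T/2}\log n)$. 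Combined with the uniform bound $\bar\sigma^2$ of Lemma~\ref{lem:event_1} on $\max_i e^{|\xb_i'\thetab|}$ (subpolynomial in $n$) and with Lemmas~\ref{lem:residBdd}, \ref{lem:Uconv}, and \ref{lem:Vconv}, which let me replace $\hat{\eta}_i^{\,2}$ by $\eta_i^{\,2}$ at negligible cost, the right-hand side is $O_\PP(n^{(3\alpha_T-1)/2}\,\mathrm{polylog}(n))\to0$. This is exactly the step where the hypothesis $\alpha_T<1/3$ is used: it guarantees $R_n(u)=o_\PP(1)$ after the Hessian comparison. Together with Lemma~\ref{lem:Vconv} and Assumption~1 I also obtain $\Lambda_{\min}(\hat{\Vb}_n)$ bounded below in probability, so $\hat{\Vb}_n$ is invertible with high probability and $\hat{\Vb}_n^{-1}\hat{\Ub}_n=O_\PP(1)$.

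Finally I invoke the convexity lemma of \citet{pollard93}: since $Q_n$ is convex and admits the quadratic approximation $Q_n(u)=u'\hat{\Ub}_n+\tfrac{1}{2}u'\hat{\Vb}_n u+o_\PP(1)$ uniformly on compact sets, with a strongly convex quadratic part whose unique minimizer is $u_n^{\star}=-\hat{\Vb}_n^{-1}\hat{\Ub}_n$ (or $+\hat{\Vb}_n^{-1}\hat{\Ub}_n$ up to the sign convention in the statement), the minimizer $\hat u_n=\sqrt n(\hat\thetab_T-\thetab_T)$ of $Q_n$ satisfies $\|\hat u_n-u_n^{\star}\|=o_\PP(1)$. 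This is precisely the claim of the lemma. The main obstacle is the uniform-in-$u$ control of $R_n(u)$ in the growing-dimension regime; the budget $\alpha_T<1/3$ is tight because the bound combines the $\sqrt t$ growth of $\max_i\|\xb_{i,T}\|$ with another factor of $\|\xb_{i,T}\|^{2}/\sqrt n$ coming from the exponential expansion inside the Hessian, whose product becomes negligible only when $3\alpha_T<1$.
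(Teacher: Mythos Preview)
Your overall strategy---local reparametrization, Taylor expansion, and an appeal to Pollard's convexity lemma---is the same as the paper's. The only cosmetic difference is that you stop at second order and write the remainder as a Hessian difference, whereas the paper expands to third order and works with the cubic remainder $\hat R_n(\tilde\thetab)=-n^{-3/2}\sum_i\hat\eta_i^{\,2} e^{-\xb_i'\thetab}(\xb_{i,T}'\tilde\thetab)^3$, splitting it as $\hat R_n=(\hat R_n-R_n)+R_n$ to isolate the effect of replacing $\hat\eta_i^{\,2}$ by $\eta_i^{\,2}$.

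There is, however, a genuine gap. You assert that $\hat{\Vb}_n^{-1}\hat{\Ub}_n=O_\PP(1)$ and accordingly control $R_n(u)$ only on fixed compacts $\{\norm{u}\le C\}$. In the growing-dimension regime this is false: $\hat{\Ub}_n\in\RR^{t}$ has $t=n^{\alpha_T}$ coordinates each of size $O_\PP(1)$, so $\norm{\hat{\Ub}_n}=O_\PP(\sqrt{t})=O_\PP(n^{\alpha_T/2})$, and the quadratic minimizer $\thetab_T^{\ast}=\hat{\Vb}_n^{-1}\hat{\Ub}_n$ escapes every fixed compact set. The version of Pollard's lemma the paper invokes requires the remainder to vanish on a $\delta$-ball around $\thetab_T^{\ast}$, i.e.\ on a set where $\norm{\tilde\thetab}=O_\PP(n^{\alpha_T/2})$. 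The paper therefore carries the factor $\norm{\tilde\thetab}^3=O_\PP(n^{3\alpha_T/2})$ explicitly through the remainder bound; combined with Lemma~\ref{lem:residBdd} for $\hat R_n-R_n$ and a uniform law of large numbers for $R_n$, this is precisely where the hypothesis $\alpha_T<1/3$ is used. Your rate $O_\PP(n^{(3\alpha_T-1)/2})$ is correct on $\{\norm{u}\le C\}$ but does not establish the lemma: once $\norm{u}$ is allowed to be of order $n^{\alpha_T/2}$, the remainder picks up additional powers of $n^{\alpha_T}$ from the $\norm{u}^2$ prefactor and from the $\norm{u}$ hidden inside the exponential comparison, and these must be tracked to recover the correct threshold.
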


\begin{proof}
Denote the function $\hat{L}_n(\tilde\thetab_T) = \sum_{1 = 1}^n \ell (\thetab_T + \frac{\tilde\thetab_T}{\sqrt{n}} | \xb_i,\hat{\etab} )$ which has minimizer $\sqrt{n} (\hat\thetab_T - \thetab_T)$.
Consider the Taylor expansion for $\hat{L}_n$ about $\zero$, 
\[
\hat{L}_n(\tilde \thetab_T) = \hat{L}_n(\zero) + \hat{\Ub}_n^\top \tilde\thetab_T + \tilde\thetab_T^\top \hat{\Vb}_n \tilde\thetab_T + \hat R_n(r \tilde\thetab_T)
\]
for some $r \in [0,1]$.  In Lemmata \ref{lem:Uconv}, \ref{lem:Vconv} we devoted ourselves to understanding the first and second order terms in this expansion.  We now must concern ourselves with the remainder, third order term.
\begin{equation}
\label{eq:Rn}
\begin{aligned}
\hat R_n(\tilde \thetab_T) = \frac{-1}{n \sqrt{n}} \sum_i^{[n]} \hat{\eta}_i^2 e^{-\xb_i' \theta} (\xb_i' \tilde \thetab_T)^3\\
R_n(\tilde \thetab_T) = \frac{-1}{n \sqrt{n}} \sum_i^{[n]} \eta_i^2 e^{-\xb_i' \theta} (\xb_i' \tilde \thetab_T)^3
\end{aligned}
\end{equation}
Define $\thetab_T^* = \hat{V}_n^{-1} \hat{U}_n$ and $\Delta_n(\delta) = \sup_{|\tilde \thetab_T - \thetab_T^*| \le \delta} \hat R_n (r \thetab_T)$.
We control $\hat R_n$ by decomposing it as $\hat R_n = (\hat R_n - R_n) + R_n$.
\begin{equation}
  \label{eq:Rn_decomp}
  \begin{aligned}
    \norm{\hat R_n(\tilde \thetab_T)  - R_n(\tilde \thetab_T)} \le \frac{1}{n \sqrt{n}}\sum_i^{[n]} | \hat \eta_i^2 - \eta_i^2 | e^{-\xb_{i,T}' \thetab_T} \norm{ \xb_{i,T} } \norm{\tilde \thetab}^3 \\
    = O_\PP \left( \frac{\bar\sigma^3 \max_{i \in [n]} \norm{\xb_{i,T}}^3 \norm{\tilde \thetab}^3 }{n^{1 + \gamma}} \right)
  \end{aligned}
\end{equation}
We show in Lemma \ref{lem:event_1} that $\bar{\sigma}$ is sub-polynomial in $n$.
Moreover, by the sub-Gaussianity of $\xb_{i,T}$, $\max_{i \in [n]} \norm{\xb_{i,T}}^3 = O_\PP(|T|^{3/2})$ modulo logarithmic terms.
If we assume that $\delta < 1$ then $\norm{ \tilde\thetab }^3 < 8 \norm{\hat\Vb_n^{-1} \hat\Ub_n }^3 $.
We have shown in the previous lemmata that $\norm{ \hat\Vb_n^{-1} \hat\Ub_n  - \Vb_n^{-1} \Ub_n } = o_\PP(1)$ under our assumptions.
By standard maximum likelihood estimation analysis $\norm{\Vb_n^{-1} \Ub_n } = O_\PP (\sqrt{|T|}) = O_\PP(n^{\alpha_T/2})$.
Hence, the RHS of eq.~\eqref{eq:Rn_decomp} is of the order $O_\PP( n^{3 \alpha_T - 1 - \gamma} )$ modulo sub-polynomial terms.
Due to the assumption that $\alpha_T < 1/3$ we have that $\norm{\hat R_n - R_n} = o_\PP(1)$.
Notice that this convergence is uniform over $\tilde \thetab_T$ because of the specific form of eq.~\ref{eq:Rn_decomp}. 

What remains to be shown is that $R_n(\tilde \thetab_T) = o_\PP(1)$ uniformly over $|\tilde \thetab_T - \thetab_T^*| \le \delta$ for some small $\delta$.
By identical arguments to those above we see that $\norm{ e^{-\xb_{i,T}' \thetab_T} (\xb_{i,T}' \tilde \thetab)^3} = O_\PP(\norm{\xb_{i,T}} n^{3 \alpha_T/2})$ uniformly over $i$ modulo logarithmic terms.
So the function $f_i(\eta_i,\xb_{i,T}| \tilde \thetab) = \eta_i^2 e^{-\xb_{i,T}' \thetab_T} (\xb_{i,T}' \tilde \thetab)^3$ may be bounded uniformly over a domain of radius $O_\PP(n^{3 \alpha_T/2})$.
$f_i(\eta_i,\xb_{i,T}| \tilde \thetab) \le \norm{\eta_i^2 \xb_{i,T} n^{3 \alpha_T/2}}$ and $\EE \eta_i^2 \norm{\xb_{i,T}}^3 n^{3 \alpha_T/2} = O_\PP(\bar\sigma^2 n^{3 \alpha_T})$.
Finally, we may invoke the uniform law of large numbers yielding $\frac{1}{n\sqrt{n}} R_n(\tilde \thetab_T) \Pconv 0$ if $\alpha_T < 1/3$ as $\bar\sigma$ is sub-polynomial in $n$.

We have shown that $\Delta(\delta) \Pconv 0$ if $\delta$ is decreasing.  
By Lemma 2 in \cite{pollard93}, we know that if $\Lambda_{\min} (\hat{V}_n)$ is bounded below in probability and $\Delta(\delta) \Pconv 0$ uniformly then the minimizer of $\hat L_n$ converges in probability to the minimizer of its quadratic approximation, viz.~$\norm{\sqrt{n} (\hat\thetab_T - \thetab_T) - \hat{\Vb}_n^{-1} \hat{\Ub}_n } = o_\PP(1)$.
We have established the latter condition while the former is a direct result of Lemma \ref{lem:Vconv}.
\end{proof}

Now, we can form the decomposition,
\begin{equation}
  \label{eq:cross1}
  \begin{aligned}
    \norm{\sqrt{n} (\hat\thetab_T - \thetab_T) - \Vb_n^{-1} \Ub_n} \le \norm{\sqrt{n} (\hat\thetab_T - \thetab_T) - \hat\Vb_n^{-1} \hat\Ub_n} + \norm{\Vb_n^{-1} \Ub_n - \hat\Vb_n^{-1} \hat\Ub_n} \\
  \le \norm{\Vb_n^{-1} \Ub_n - \Vb_n^{-1} \hat\Ub_n} + \norm{\Vb_n^{-1} \hat\Ub_n - \hat\Vb_n^{-1} \hat\Ub_n} + o_\PP(1)\\
  \le \opnorm{\Vb_n^{-1}}{} \norm{\Ub_n - \hat\Ub_n} + \opnorm{\Vb_n^{-1} - \hat\Vb_n^{-1}}{} \norm{\hat\Ub_n} + o_\PP(1)
 \end{aligned}
\end{equation}
All of these terms are decaying in probability because, $\opnorm{\Vb_n^{-1} - \hat\Vb_n^{-1}}{} \le \opnorm{\Vb_n - \hat\Vb_n}{} \opnorm{\Vb_n^{-1}}{} \opnorm{\hat\Vb_n^{-1}}{}$.
Combining these results we find that
\begin{equation}
\label{eq:strong_coupling}
\norm{\sqrt{n} (\hat\thetab_T - \thetab_T) - \Vb_n^{-1} \Ub_n} = o_\PP(1)
\end{equation}
We are now in a position to establish the oracle property of
\[
\norm{n^{\frac{1 - \alpha_T}{2}} (\hat\thetab_T - \thetab_T)} = O_\PP(1)
\]
Specifically, by standard MLE analysis we know that for a fixed coordinate in $j \in T$, $(\Vb_n^{-1} \Ub_n)_j = O_\PP(1)$.
Thus, $\norm{\sqrt{n} (\hat\thetab_T - \thetab_T)} \le \norm{\sqrt{n} (\hat\thetab_T - \thetab_T) - \Vb_n^{-1} \Ub_n} + \norm{\Vb_n^{-1} \Ub_n} = O_\PP(n^{\alpha_T/2})$.
Finally, we establish that the estimator $\hat{\thetab_T} = (\hat{\thetab_T}'_T,\mathbf{0}'_{S^C})'$ is a strict local minimizer of the program \eqref{eq:stage2}.

The first order conditions for a local optima of the SCAD penalized
likelihood of $\sigma$ are,
\begin{equation}
\label{eq:1o1}
\sum_j^{[n]} (1 - \hat{\eta}_j^2 e^{-\xb_j \hat{\theta}}) \xb_{j,i}^\top - n \bar{\rho}'_{\lambda_n}(\hat{\theta}_i) = 0, \textrm{ if } \theta_i \ne 0
\end{equation}
\begin{equation}
\label{eq:1o2}
|\sum_j^{[n]} (1 - \hat{\eta}_j^2 e^{-\xb_j \hat{\theta}}) \xb_{j,i}^\top| < n \rho_{\lambda_n}'(0+), \textrm{ if } \theta_i = 0
\end{equation}
where $\bar{\rho}'_{\lambda_n}(a) = \sgn(a)\rho'_{\lambda_n}(a)$.
The second order condition is given by,
\begin{equation}
\label{eq:1o3}
\Lambda_{\min} \left( \frac{1}{n} \Xb_1 \textrm{diag}\{ \exp(-\xb_i \theta_1)\} \Xb_1^\top \right) > \kappa_{\lambda_n} (\theta_1)
\end{equation}
where $\kappa_{\lambda_n}(\theta) = \max_{j \in T} \{ -\rho_{\lambda_n}''(|\theta_j|) \}$.

By the previous findings we have that uniformly over $j$, $|\hat\thetab_j| > |\thetab_j| - O_\PP(n^{\frac{1 - \alpha_T}{2}})$ so if $\lambda_T = \omega(n^{\frac{1 - \alpha_T}{2}})$ and $\thetab_j > C \lambda_T$ for a $C > 0$ specific to $\rho$, eq.~\eqref{eq:1o1} follows.
Moreover, by assumption 5, and similar arguments eq.~\eqref{eq:1o3} holds.  What remains to be shown is eq.~\eqref{eq:1o2}.  Recall the decomposition of $\xb_i$ in Lemma \ref{lem:Ej_subgauss}. 
\begin{equation}
  \label{eq:cross1}
  \begin{aligned}
    \norm{\sum_i^{[n]} (1 - \hat \eta^2_i e^{-\xb_i' \hat\thetab}) \xb_{i,T^C}}_\infty = \norm{ \sum_i^{[n]} (1 - \hat \eta^2_i e^{-\xb_i' \hat\thetab}) ( \xb_{i,T^C} \tau_i + \Eb_i)}_\infty\\
    = \norm{ \sum_i^{[n]} (1 - \hat \eta^2_i e^{-\xb_i' \hat\thetab}) \Eb_i}_\infty \le \norm{ \sum_i^{[n]} (\hat\eta^2_i - \eta_i^2) e^{-\xb_i' \hat\thetab} \Eb_i  }_\infty + \norm{\sum_i^{[n]} (1 - \eta^2_i e^{-\xb_i' \hat\thetab}) \Eb_i}_\infty
  \end{aligned}
\end{equation}
We will soon show that $e^{-\xb_i' (\hat\thetab - \thetab)} = o_\PP(1)$ and know by Lemma \ref{lem:event_1} that $e^{-\xb_i' \thetab }$ is $O_\PP(\phi(n))$ where $\phi(n)$ is subpolynomial in $n$.
Hence,
\begin{equation}
  \label{eq:cross2}
  \begin{aligned}
   \norm{ \sum_i^{[n]} (\hat\eta^2_i - \eta_i^2) e^{-\xb_i' \hat\thetab} \Eb_i  }_\infty \le \sqrt{ \sum_i^{[n]} (|\hat\eta^2_i - \eta_i^2| e^{-\xb_i' \hat\thetab} )^2 \sum_i^{[n]} \sum_i^{[n]} \norm{\Eb_i}_\infty^2 }\\
   = \sqrt{ O_\PP( \frac{\phi(n)}{n^{2 \gamma}} ) O_\PP(n \log |T^C|) } = O_\PP(n^{1/2 - \gamma} \sqrt{\phi(n) \log (p)})
  \end{aligned}
\end{equation}
This follows from the fact that $E_{i,j}^2$ is sub-exponential, hence, $\max_j E_{i,j}^2 = O(\log|T^C|)$.
We now show that $e^{-\xb_i' (\hat\thetab - \thetab)} = o_\PP(1)$ from the subgaussianity of $\norm{\xb_{i,T}}$.  Specifically, because we know that $\hat \thetab - \thetab = O_\PP (n^{(\alpha_T - 1)/2} )$ we will prove this uniformly over a neighborhood of radius $O(n^{(\alpha_T - 1)/2} )$.
Notice that over this neighborhood, $e^{-\xb_i' (\hat\thetab - \thetab)} \le \exp( r \norm{\xb_{i,T}}/n^{(1-\alpha_T)/2} )$ for some $r>0$.
\begin{equation}
  \label{eq:cross3}
  \begin{aligned}
    \PP \{ r \norm{\xb_{i,T}} > \nu n^{\alpha/2}\} \le C e^{-c\nu} \Rightarrow  \PP \{ r \norm{\xb_{i,T}} > \nu n^{\alpha_T/2}\} \le C e^{-c\nu} \\
    \Rightarrow \PP \{ \frac{r \norm{\xb_{i,T}}}{n^{(1-\alpha_T)/2}} > \nu\} \le C \exp(-c\nu(n^{1/2 - \alpha_T})) \\
    \Rightarrow \PP \{ \exp(\frac{r \norm{\xb_{i,T}}}{n^{(1-\alpha_T)/2}}) > \nu\} \le C \frac{1}{\nu^{c(n^{1/2 - \alpha_T})}}
 \end{aligned}
\end{equation}
In a similar way we may bound the square of $\exp(\frac{r \norm{\xb_{i,T}}}{n^{(1-\alpha_T)/2}})$ in probability.
Now in a uniform bounding technique similar to eq.~\eqref{eq:Rn_decomp} we can show that the second term in the RHS of eq.~\eqref{eq:cross2} is $O_\PP (n^{\alpha_T - 1/2} \log(p))$.
Define the variables $\xi_i = e^{-\xb_{i,T}'(\hat\thetab - \thetab)} - 1$ and $b_n = o(n^{1/2 - \alpha_T})$.
We see that under the same neighborhood for $\hat \thetab - \thetab$,
\[
\PP \{ \xi_i > \frac{1}{c b_n}\} \le C [(1 + \frac{1}{c b_n})^{-c b_n}]^{n^{1/2 - \alpha_T} / b_n} = o(1)
\]
Hence, $\xi_i = o_\PP(n^{\alpha_T - 1/2} \log(n))$ and we may bound the remaining term. 
\begin{equation}
  \label{eq:cross4}
  \begin{aligned}
    |\sum_i^{[n]} (1 - \eta^2_i e^{-\xb_i' \hat\thetab}) Eb_{i,j}| = |\sum_{i}^{[n]} (1 - \epsilon_i^2e^{-\xb_{i,T}'(\hat\thetab - \thetab)}) E_{i,j} |\\
    \le |\sum_i^{[n]} (1 - \epsilon_i^2) E_{i,j}| + |\sum_i^{[n]} \epsilon_i^2 \xi_i E_{i,j}| \le O_\PP(\sqrt{n}) + O_\PP(n^{\alpha_T + 1/2} \log(p))
 \end{aligned}
\end{equation}
by the central limit theorem and Cauchy-Schwartz.  Similarly, we can bound this uniformly over $j$ and obtain an additional $\log(p)$ factor.
Hence, $\lambda_T \asymp n^{-1/2 + \alpha_T} \log(p) \log(n)$ for eq.~\eqref{eq:1o2} to hold.
Thus $\hat\thetab$ is the strict local minimizer of the SCAD penalized program.

The weak central limit theorem in \eqref{eq:variance_stage_2} is a direct application of the standard CLT to \eqref{eq:strong_coupling}.
Specifically, $\ab' \Vb_n \Ub_n \stackrel{D}{\longrightarrow} \Ncal(0, \zeta)$ because the Fisher information of the true likelihood with respect to $\thetab$ is $\Sigma_T$.
While $\ab' (\sqrt n (\hat \thetab - \thetab) - \Vb_n \Ub_n) \stackrel{\PP}{\longrightarrow} 0$ by \eqref{eq:strong_coupling} and the construction of $\hat\thetab$.

\subsection{Proof of Theorem~\ref{thm:wls_mean}}

The proof follows the same lines as the proof of
Theorem~\ref{thm:ols_mean}, however, there are some technical
challenges that arise from having only an estimate of the variance. 

We define $\hat \Wb = \diag(\exp(-\Xb\hat\thetab/2))$ and $\Wb =
\diag(\exp(-\Xb\thetab/2))$. Furthermore, we will use $\hat\Db_{SS} =
n^{-1}\Xb_S'\hat\Wb^2\Xb_S$ and $\Db_{SS} = n^{-1}\Xb_S'\Wb^2\Xb_S$.

We proceed to show that $\hat\betab_w = (\hat\betab_{w,S}',
\zero_{S^C}')'$ is a strict local minimizer of ~\eqref{eq:stage3}
where $\hat\betab_{w,S}' = n^{-1}\hat\Db_{SS}^{-1}\Xb_S'\hat\Wb^2\yb$,
by showing that $\hat\betab_w$ satisfies
\begin{equation}
  \label{eq:proof:w_scad_mean:cond_1}
  \Xb_S'\hat\Wb(\hat\Wb\yb - \hat\Wb\Xb\hat\betab) - n \sgn(\hat\betab_{w,S})\odot
  \pen'_{\lambda}(\hat\betab_{w,S}) = \zero,
\end{equation}
\begin{equation}
  \label{eq:proof:w_scad_mean:cond_2}
  \norm{\Xb'\uSc\hat\Wb(\hat\Wb\yb - \hat\Wb\Xb\hat\betab_w)}_\infty < n \pen'_\lambda(0+),
\end{equation}
and 
\begin{equation}
  \label{eq:proof:w_scad_mean:cond_3}
  \Lambda_{\min}(n^{-1}\Xb_S'\hat\Wb^2\Xb_S) > \max_{j \in S}\ \{-\pen''_\lambda(|\hat\beta_{w,j}|)\}.
\end{equation}

Recall the definition of the event $\Acal_1$ from
the proof of Theorem~\ref{thm:ols_mean},
\[
\Acal_1 = \left\{ \max_{i \in [n]} \exp(|\xb_i'\thetab|) 
   \leq \exp\left(
      \sqrt{K^2\Lambda_{\max}(\Sigmab_{TT})\norm{\thetab}_2^2\log(2n/\delta)}
    \right)
   \right\},
\]
with $\PP[\Acal_1] \geq 1 -\delta$. Also, recall that
\[
  \bar\sigma^2 = \exp\left(
      \sqrt{K^2\Lambda_{\max}(\Sigmab_{TT})\norm{\thetab}_2^2\log(2n/\delta)}
    \right).
\]

Next, we define the event
\[
\Acal_3  = \{ \max_{i \in [n]}\ \big|\xb_i'(\hat\thetab-\thetab)\big| 
\leq
K\norm{\hat\thetab-\thetab}_2\Lambda_{\max}^{1/2}(\Sigmab_{TT})\sqrt{\log(2n/\delta)} \}
\]
and note that $\orlnorm{\xb_i'(\hat\thetab-\thetab)}{2} \leq
K\norm{\hat\thetab-\thetab}_2\Lambda_{\max}^{1/2}(\Sigmab_{TT})$,
since under the assumptions $\hat\thetab$ has the same support as
$\thetab$. Using \eqref{eq:tail_bound_sub_gauss} and the union bound,
we have that $\PP[\Acal_3] \geq 1 - \delta$. 

We will also use the event
\[
\Acal_4 = \{
\max_{i \in [n]}\ (\hat\thetab - \thetab)'\xb_{i, T}\xb_{i,T}'(\hat\thetab - \thetab)
\leq K^2\Lambda_{\max}(\Sigmab_{TT})\norm{\hat\thetab-\thetab}_2^2
\log(2n/\delta).
\}
\]
Setting $u =
K\Lambda_{\max}^{1/2}(\Sigmab_{TT})\norm{\hat\thetab-\thetab}_2
\sqrt{\log(2n/\delta)}$ in \eqref{eq:norm_sub_gaussian} and applying
the union bound, we obtain that $\PP[\Acal_4] \geq 1 - \delta$.  

Finally, define the event
\[
\Acal_5 =
\{ \Lambda_{\max}(n^{-1}\Xb_T'\Xb_T) \leq 
   3\Lambda_{\max}(\Sigmab_{TT}) \}.
\]
Similar to the proof of Lemma~\ref{lem:event_2}, we have that
$\PP[\Acal_5] \geq 1 - \delta$ for $n$ large enough so that
$\epsilon(n, \delta) < 1$, with $\epsilon(n, \delta)$ defined in
Lemma~\ref{lemma:hsu_2}.

In the following analysis, we condition on the event $\Acal_1 \cap
\Acal_3 \cap \Acal_4$.

The following decomposition 
\begin{equation}
  \label{eq:proof:w_scad_mean:diff_beta}
\begin{aligned}
  \hat \betab_{w,S} - \betab_{w,S}
  & = n^{-1}(\hat\Db_{SS}^{-1} - \Db_{SS}^{-1})\Xb_S'(\hat\Wb^2-\Wb^2)\etab \\
  & \quad + n^{-1}(\hat\Db_{SS}^{-1} - \Db_{SS}^{-1})\Xb_S'\Wb^2\etab \\
  & \quad + n^{-1}\Db_{SS}^{-1}\Xb_S'(\hat\Wb^2 - \Wb^2)\etab \\
  & \quad + n^{-1}\Db_{SS}^{-1}\Xb_S'\Wb^2\etab
\end{aligned}
\end{equation}
will be useful for establishing a bound on $\norm{\hat \betab_{w,S} -
  \betab_{w,S}}_\infty$. We investigate each of the terms separately.

Let $\sigma^2(\xb, \thetab) = \exp(\xb_i'\thetab)$. First, we will
need to control the deviation of $\sigma^{-2}(\xb, \hat\thetab)$ from
$\sigma^{-2}(\xb, \thetab)$. Using the Taylor expansion
\begin{equation*}
\begin{aligned}
  &\frac{1}{\sigma^2(\xb_i, \hat\thetab)}\\
  &\quad = \frac{1}{\sigma^2(\xb_i, \thetab)} 
   - \frac{2}{\sigma^3(\xb_i, \thetab)}
       \frac{\partial \sigma}{\partial \thetab}(\xb_i, \thetab)
       (\hat\thetab - \thetab) \\
  &\qquad + (\hat\thetab - \thetab)'
     \frac{3[(\partial \sigma/\partial \thetab)(\xb,\xib)]'
            (\partial \sigma/\partial \thetab)(\xb,\xib)
           - \sigma(\xb_i,\xib)(\partial^2 \sigma/\partial \thetab^2)(\xb,\xib)}
        {\sigma^4(\xb_i,\xib)}
     (\hat\thetab - \thetab)  \\
  &\quad= \frac{1}{\sigma^2(\xb_i, \thetab)} 
   - \frac{2}{\sigma^3(\xb_i, \thetab)}
       \frac{\partial \sigma}{\partial \thetab}(\xb_i, \thetab)
       (\hat\thetab - \thetab)  
   + (\hat\thetab - \thetab)'{\cal T}_i(\hat\thetab - \thetab)
\end{aligned}
\end{equation*}
where $\xib$ satisfies $\norm{\xib-\thetab}_2 \leq \norm{\xib-\thetab}_2$.

On the event $\Acal_1 \cap \Acal_3$, we have
\begin{equation}
\label{eq:proof:w_scad_mean:sigma_1}
\begin{aligned}
\max_{i \in [n]}\ &\Big|\frac{2}{\sigma^3(\xb_i, \thetab)}
  \frac{\partial \sigma}{\partial \thetab}(\xb_i, \thetab)
  (\hat\thetab - \thetab)\Big| \\
&=
\max_{i \in [n]} |\exp(\xb_i'\thetab)\xb_i'(\hat\thetab-\thetab)|\\
&\leq \bar\sigma^2
\max_{i \in [n]} |\xb_i'(\hat\thetab-\thetab)| \\
&\leq 
C
\norm{\hat\thetab-\thetab}_2
\exp\left(C\norm{\thetab}_2 \sqrt{\log(2n/\delta)}\right)
\log(2n/\delta)
\end{aligned}
\end{equation}
where $C = K\Lambda_{\max}^{1/2}(\Sigmab_{TT})$. 

Basic
calculus gives us that ${\cal T}_i = [\tau_{ab}]_{ab}$ with
\[
{\tau}_{ab} = \frac{\exp(-\xb_i'\thetab)}{2}x_{ia}x_{ib}.
\]
On the event $\Acal_1 \cap \Acal_4$
\begin{equation}
\label{eq:proof:w_scad_mean:sigma_2}
\begin{aligned}
\max_{i \in [n]}\ (\hat\thetab - \thetab)'{\cal T}_i(\hat\thetab - \thetab)
\leq (\bar\sigma^2/2) K^2\Lambda_{\max}(\Sigmab_{TT})\norm{\hat\thetab-\thetab}_2^2
\log(2n/\delta).
\end{aligned}
\end{equation}

Combining \eqref{eq:proof:w_scad_mean:sigma_1} and
\eqref{eq:proof:w_scad_mean:sigma_2}, we have 
\begin{equation}
\label{eq:proof:w_scad_mean:W}
\opnorm{\hat\Wb^2 - \Wb^2}{2} 
\leq
\bar\sigma^2
C
\norm{\hat\thetab-\thetab}_2
(1 + C\norm{\hat\thetab-\thetab}_2/2)
\log(2n/\delta)
\end{equation}
where $C = K\Lambda_{\max}^{1/2}(\Sigmab_{TT})$. With this, we have
that
\begin{equation}
\label{eq:proof:w_scad_mean:D}
\begin{aligned}
  \opnorm{\hat\Db - \Db}{2} 
   & = \opnorm{n^{-1}\Xb_S'(\hat\Wb^{2}-\Wb^{2})\Xb_S}{2} \\
   & \leq \opnorm{n^{-1}\Xb_S'\Xb_S}{2}\opnorm{\hat\Wb^2 - \Wb^2}{2} \\
   & \leq \bar\sigma^2  3\Lambda_{\max}(\Sigmab_{SS})
     C\norm{\hat\thetab-\thetab}_2
     (1 + C\norm{\hat\thetab-\thetab}_2/2)
     \log(2n/\delta).
\end{aligned}
\end{equation}
From~\eqref{eq:proof:w_scad_mean:D} and under the assumptions of
Theorem, we have
\[
\Lambda_{\min}(\hat\Db_{SS}) 
\geq \Lambda_{\min}(\Db_{SS}) - \opnorm{\hat\Db_{SS} - \Db_{SS}}{2} 
\geq D_{\min}/2
\]
for sufficiently large $n$.  Combining the last two displays
\begin{equation}
\label{eq:proof:w_scad_mean:invD}
\begin{aligned}
\opnorm{\hat\Db_{SS}^{-1}-\Db_{SS}^{-1}}{2} &
 = \opnorm{\hat\Db_{SS}^{-1}(\Db_{SS}-\hat\Db_{SS})\Db_{SS}^{-1}}{2} \\
 & \leq\opnorm{\hat\Db_{SS}^{-1}}{2}\opnorm{\Db_{SS}-\hat\Db_{SS}}{2}\opnorm{\Db_{SS}^{-1}}{2} \\
 & \leq\frac{2}{D_{\min}^2}\opnorm{\Db_{SS}-\hat\Db_{SS}}{2}.
\end{aligned}
\end{equation}

We are now ready to bound each term in
\eqref{eq:proof:w_scad_mean:diff_beta}. For the first term, we have
\begin{equation}
\label{eq:proof:w_scad_mean:diff_beta:1}
\begin{aligned}
&n^{-1} \norm{(\hat\Db_{SS}^{-1}-\Db_{SS}^{-1})\Xb_S'(\hat\Wb^2-\Wb^2)\etab}_\infty \\
&\qquad \leq n^{-1}\norm{(\hat\Db_{SS}^{-1}-\Db_{SS}^{-1})\Xb_S'(\hat\Wb^2-\Wb^2)\etab}_2 \\
&\qquad \leq n^{-1/2}\opnorm{\hat\Db_{SS}^{-1}-\Db_{SS}^{-1}}{2}
   \opnorm{n^{-1}\Xb_S'\Xb_S}{2}^{1/2}\opnorm{\hat\Wb^2-\Wb^2}{2}\norm{\etab}_2 \\
&\qquad \leq
\frac{2}{D_{\min}^{2}\sqrt{n}}
   \opnorm{n^{-1}\Xb_S'\Xb_S}{2}^{3/2}
   \opnorm{\hat\Wb^2 - \Wb^2}{2}^2
   \norm{\etab}_2 \\
&\qquad \leq
\frac{2C^2 {\bar\sigma}^5 (3\Lambda_{\max}(\Sigmab_{SS}))^{3/2}}{D_{\min}^{2}\sqrt{n}}
\norm{\hat\thetab-\thetab}_2^2
(1 + C\norm{\hat\thetab-\thetab}_2/2)^2
\log^2(2n/\delta)\norm{\epsilonb}_2 \\
&\qquad \leq
\frac{2\sqrt{3}C^2 {\bar\sigma}^5 (3\Lambda_{\max}(\Sigmab_{SS}))^{3/2}}{D_{\min}^{2}}
\norm{\hat\thetab-\thetab}_2^2
(1 + C\norm{\hat\thetab-\thetab}_2/2)^2
\log^2(2n/\delta)
\end{aligned}
\end{equation}
with probability at least $1 - \exp(-2n)$. The last inequality follows
from Lemma~\ref{lem:norm_sub_gaussian} with $u = \sqrt{3}$.

Similarly, the second term in \eqref{eq:proof:w_scad_mean:diff_beta}
yields
\begin{equation}
\label{eq:proof:w_scad_mean:diff_beta:2}
\begin{aligned}
&n^{-1}\norm{(\hat\Db_{SS}^{-1} - \Db_{SS}^{-1})\Xb_S'\Wb^2\etab}_\infty\\
&\qquad \leq n^{-1}\norm{(\hat\Db_{SS}^{-1} - \Db_{SS}^{-1})\Xb_S'\Wb^2\etab}_2\\
&\qquad \leq \frac{2}{D_{\min}^2n}\opnorm{n^{-1}\Xb_S'\Xb_S}{2}\opnorm{\hat\Wb^2 - \Wb^2}{2}\norm{\Xb_S'\Wb\epsilonb}_2\\
&\qquad \leq \frac{2}{D_{\min}^2\sqrt{n}}
             \opnorm{n^{-1}\Xb_S'\Xb_S}{2}
             \opnorm{\hat\Wb^2 - \Wb^2}{2}
             \opnorm{n^{-1}\Xb_S'\Wb^2\Xb_S}{2}^{1/2}\norm{\epsilonb}_2\\
&\qquad \leq\frac{6\sqrt{3}C\bar\sigma^2\Lambda_{\max}(\Sigmab_{SS})D_{\max}^{1/2}}{D_{\min}^2}
            \norm{\hat\thetab-\thetab}_2
            (1 + C\norm{\hat\thetab-\thetab}_2/2)
            \log(2n/\delta)
\end{aligned}
\end{equation}
with probability at least $1 - \exp(-2n)$.

For the third term, we have
\begin{equation}
\label{eq:proof:w_scad_mean:diff_beta:3}
\begin{aligned}
&n^{-1}\norm{\Db_{SS}^{-1}\Xb_S'(\hat\Wb^2 - \Wb^2)\etab}_\infty \\
&\qquad\leq  n^{-1}\norm{\Db_{SS}^{-1}\Xb_S'(\hat\Wb^2 - \Wb^2)\etab}_2 \\
&\qquad\leq  \frac{1}{D_{\min}\sqrt{n}}\opnorm{n^{-1}\Xb_S'\Xb_S}{2}^{1/2}\opnorm{\hat\Wb^2 - \Wb^2}{2}\norm{\etab}_2 \\
&\qquad\leq
\frac{3C\bar\sigma^3\Lambda_{\max}^{1/2}(\Sigmab_{SS})}{D_{\min}}
\norm{\hat\thetab-\thetab}_2
(1 + C\norm{\hat\thetab-\thetab}_2/2)
\log(2n/\delta)
\end{aligned}
\end{equation}
with probability at least $1 - \exp(-2n)$.

Finally we deal with the forth term in
\eqref{eq:proof:w_scad_mean:diff_beta}. Proceeding as in the proof of
\eqref{eq:proof:scad_mean:inf_norm_beta}, we have that 
\begin{equation}
\label{eq:proof:w_scad_mean:diff_beta:4}
\begin{aligned}
n^{-1}\norm{\Db_{SS}^{-1}\Xb_S'\Wb^2\etab}_\infty \leq \sqrt{\frac{2\log(2s/\delta)}{D_{\min}n}}
\end{aligned}
\end{equation}
with probability $1 - \delta$. 

Combining \eqref{eq:proof:w_scad_mean:diff_beta:1},
\eqref{eq:proof:w_scad_mean:diff_beta:2},
\eqref{eq:proof:w_scad_mean:diff_beta:3} and
\eqref{eq:proof:w_scad_mean:diff_beta:4} we have that 
\begin{equation}
\label{eq:proof:w_scad_mean:diff_beta:final}
\norm{\hat \betab_{w,S} - \betab_{w,S}}_\infty =
\Ocal(\norm{\hat\thetab - \thetab}_2\exp(C\sqrt{\log n})\log(n)) \ll \lambda
\end{equation}
with probability at least $1 - \exp(-2n) - 5\delta$. This also shows
that $\min_{j \in S} |\hat\beta_j| \gg \lambda$. Therefore, we have
shown \eqref{eq:proof:w_scad_mean:cond_1} and
\eqref{eq:proof:w_scad_mean:cond_3}.

Using Lemma~\ref{lem:regression} and Lemma~\ref{lem:var_regression},
we write $\Xb_j \in \RR^n$ as $\Xb_j = \Xb_S \taub_S + \Eb_j$, $j \in
S^C$, with $\Eb_j$ having elements that are subgaussian with parameter
$K\sqrt{\Sigma_{j|S}}$.  Denote $\Pb^{\perp}_{w,S} = \Ib -
\hat\Wb\Xb_S(\Xb_S'\hat\Wb^2\Xb_S)^{-1}\Xb_S'\hat\Wb$ the projection
matrix. Then
\begin{equation*}
\begin{aligned}
n^{-1}\Xb_j'\hat\Wb(\hat\Wb\yb - \hat\Wb\Xb\hat\betab_w) 
& = n^{-1}(\Xb_S \taub_S + \Eb_j)'\hat\Wb\Pb_{w,S}^{\perp}\hat\Wb\yb \\
& = n^{-1}\Eb_j'\hat\Wb\Pb_{w,S}^{\perp}\hat\Wb\diag(\exp(\Xb\theta/2))\epsilonb\\
& = \Zb_j'\epsilonb.
\end{aligned}
\end{equation*}
Conditioned on $\Xb$, we have that
\begin{equation}
  \label{eq:proof:w_scad_mean:cond_2:bound}
  \PP[\max_{j \in S^C} |\Zb_j'\epsilonb| > t] \leq 2(p-s)
  \exp\left(-\frac{t^2}{\max_{j \in S^C} \norm{\Zb_j}_2^2}\right)
\end{equation}
using Lemma~\ref{lem:sum_sub_gaussian} and
\eqref{eq:tail_bound_sub_gauss} together with the union bound.

We proceed to bound $\max_{j \in S^C} \norm{\Zb_j}_2^2$.
Write 
\begin{equation}
\label{eq:proof:w_scad_mean:cond_2:decomposition}
\begin{aligned}
\Zb_j 
& = n^{-1}\Eb_j'(\hat\Wb-\Wb)\Pb_{w,S}^{\perp}(\hat\Wb-\Wb)\diag(\exp(\Xb\theta/2)) \\
& \ \ + n^{-1}\Eb_j'(\hat\Wb-\Wb)\Pb_{w,S}^{\perp} \\
& \ \ + n^{-1}\Eb_j'\Wb\Pb_{w,S}^{\perp}(\hat\Wb-\Wb)\diag(\exp(\Xb\theta/2)) \\
& \ \ + n^{-1}\Eb_j'\Wb\Pb_{w,S}^{\perp}. 
\end{aligned}
\end{equation}
Using \eqref{eq:proof:w_scad_mean:W}, we have that 
\begin{equation}
\label{eq:proof:w_scad_mean:cond_2:bound_z}
\begin{aligned}
\norm{\Zb_j}_2 
& \leq n^{-1} \norm{\Eb_j}_2 (
     \opnorm{\hat\Wb - \Wb}{2}^{2}\bar\sigma + 
     2\opnorm{\hat\Wb - \Wb}{2} +
     \bar\sigma ) \\
& \leq \sqrt{3K(\max_{j \in S}\Sigma_{j|S})}n^{-1/2}(
     \opnorm{\hat\Wb - \Wb}{2}^{2}\bar\sigma + 
     2\opnorm{\hat\Wb - \Wb}{2} +
     \bar\sigma )
\end{aligned}
\end{equation}
% bound on \opnorm{\hat\Wb - \Wb}{2}^{2}
% \bar\sigma^2
% C
% \norm{\hat\thetab-\thetab}_2
% (1 + C\norm{\hat\thetab-\thetab}_2/2)
% \log(2n/\delta)
with probability $1 - (p-s)\exp(-2n)$ using
Lemma~\ref{lem:norm_sub_gaussian} with the union bound.
Therefore, we have that 
\[
\begin{aligned}
\norm{n^{-1}\Xb_j'\hat\Wb(\hat\Wb\yb - \hat\Wb\Xb\hat\betab_w)}_\infty
&\leq \sqrt{\max_{j \in S^C}\norm{\Zb_j}_2\log(2(p-s)/\delta)}\\
&=\Ocal(\sqrt{\bar\sigma\log(p-s)}n^{-1/2}).
\end{aligned}
\]
This concludes the proof of the first part.

Second part of Theorem follows similarly to the proof of
Theorem~\ref{thm:ols_mean}. We have already shown 
\[
  \hat \betab_{w,S} - \betab_{w,S} = 
     n^{-1}\Db_{SS}^{-1}\Xb_S'\Wb\epsilonb + o_p(1),
\]
and the rest follows as in Theorem~\ref{thm:ols_mean}.

%%% Local Variables:
%%% TeX-master: "paper"
%%% End:

\end{document}